\useunder{\uline}{\ul}{}
\newcommand{\cmark}{\ding{51}}%
\newcommand{\xmark}{\ding{55}}%
\newtheorem{remark}{Remark}
\def\eqref#1{equation~\ref{#1}}
\def\1{\bm{1}}
\def\vtheta{{\bm{\theta}}}
\DeclareMathAlphabet{\mathsfit}{\encodingdefault}{\sfdefault}{m}{sl}
\SetMathAlphabet{\mathsfit}{bold}{\encodingdefault}{\sfdefault}{bx}{n}
\newcommand{\R}{\mathbb{R}}
\newcommand{\KL}{D_{\mathrm{KL}}}
\newcommand{\fdv}{D_f}
\newcommand{\expect}{\mathbb{E}}
\newcommand{\pref}{\pi_{\textnormal{ref}}}
\newcommand{\data}{\mathcal{D}}
\definecolor{alizarin}{RGB}{227,38,54}
\definecolor{ultramarine}{RGB}{24,13,191}
\definecolor{navyblue}{rgb}{0.0, 0.0, 0.5}
\definecolor{red}{HTML}{d73027}
\definecolor{blue}{HTML}{3288bd}
\definecolor{green}{HTML}{1a9850}
\definecolor{myblue}{rgb}{0.0, 0.0, 0.5}
\title{Beyond Reverse KL: Generalizing Direct Preference Optimization with Diverse Divergence Constraints}
\author{Chaoqi Wang~~~~ Yibo Jiang~~~~ Chenghao Yang~~~~ Han Liu~~~~ Yuxin Chen\\
Department of Computer Science, University of Chicago\\
\texttt{\{chaoqi, yiboj, chenghao, hanliu, chenyuxin\}@uchicago.edu}
}
\begin{document}

\maketitle

\begin{abstract}
 The increasing capabilities of large language models (LLMs) raise opportunities for artificial general intelligence but concurrently amplify safety concerns, such as potential misuse of AI systems, necessitating effective AI alignment. Reinforcement Learning from Human Feedback (RLHF) has emerged as a promising pathway towards AI alignment but brings forth challenges due to its complexity and dependence on a separate reward model. Direct Preference Optimization (DPO) has been proposed as an alternative, and it remains equivalent to RLHF under the reverse KL regularization constraint. This paper presents $f$-DPO, a generalized approach to DPO by incorporating diverse divergence constraints. We show that under certain $f$-divergences, including Jensen-Shannon divergence, forward KL divergences and $\alpha$-divergences, the complex relationship between the reward and optimal policy can also be simplified by addressing the Karush–Kuhn–Tucker conditions. This eliminates the need for estimating the normalizing constant in the Bradley-Terry model and enables a tractable mapping between the reward function and the optimal policy. Our approach optimizes LLMs to align with human preferences in a more efficient and supervised manner under a broad set of divergence constraints. Empirically, adopting these divergences ensures a balance between alignment performance and generation diversity. Importantly, $f$-DPO outperforms PPO-based methods in divergence efficiency, and divergence constraints directly influence expected calibration error (ECE). %

\end{abstract}

\section{Introduction}

The increasing capabilities of large language models~\citep{bubeck2023sparks,OpenAI2023GPT4TR} hold promise for achieving artificial general intelligence. However, they also pose safety concerns within the scope of AI risk~\citep{amodei2016concrete,hendrycks2023overview}. Some of the hazardous capabilities an AI system may possess include social manipulation~\citep{hendrycks2023overview}, AI-enabled cyberattacks~\citep{shevlane2023model}, and enhanced pathogens~\citep{urbina2022dual}. These could be misused by humans or exploited by the AI system itself. Consequently, AI alignment research becomes critically important in ensuring AI systems are robustly aligned with human values.

Reinforcement Learning from Human Feedback (RLHF) has emerged as a concrete research agenda, proving effective in aligning model behaviors with human preferences and instruction following~\citep{christiano2017deep,bai2022training,llama2}. 
Given the challenge of specifying an objective that accurately represents human preferences in RLHF, researchers typically collect a dataset that reflects human preference in terms of model-wise generation comparisons~\citep{bai2022training,laion2023openassistant}.
Subsequently, a reward model is trained under the Bradley-Terry model~\citep{bradley1952rank} to infer the human's objective from the collected dataset. The language model is then fine-tuned using RL algorithms such as Proximal Policy Optimization~\citep{schulman2017proximal,ouyang2022training} or Advantage Actor-Critic~\citep{mnih2016asynchronous,glaese2022improving} to maximize the reward. This process is carried out while ensuring that the model does not deviate significantly from its original form, using the reverse KL divergence penalty.

While effective, the RLHF pipeline is significantly more complex than supervised learning. In particular, RLHF necessitates training a separate reward model. The quality of this model ultimately determines the performance of reinforcement fine-tuning, and the language model may exploit errors present within the reward model~\citep{gao2023scaling}. Additionally, RL algorithms, such as PPO~\citep{schulman2017proximal}, are less stable and more memory-demanding than supervised learning~\citep{llama2}. These challenges pique interest in searching for alternatives to the RLHF pipeline. Such efforts include Reward rAnked FineTuning (RAFT)~\citep{dong2023raft}, Rank Responses to align Human Feedback (RRHF)~\citep{yuan2023rrhf}, and Direct Preference Optimization (DPO)~\citep{rafailov2023direct}. DPO, as an early initiative, leverages the mapping between the reward function and the optimal policy to bypass the need for reinforcement learning and explicit reward model learning. Still, it's equivalent to RLHF for the final solution under reverse KL regularization.

However, most existing studies focus on solutions under the constraint of reverse KL divergence, and the exploration of incorporating other divergences remains significantly lacking. 
To illustrate the differences among various divergence constraints, we have visualized the mode-seeking and mass-covering behaviors of reverse KL and forward KL divergences in \cref{fig:mode-seeking-mass-covering} %
\begin{wrapfigure}{r}{0.25\textwidth}
\vspace{-0.4cm}
  \centering
  \includegraphics[width=0.26\textwidth]{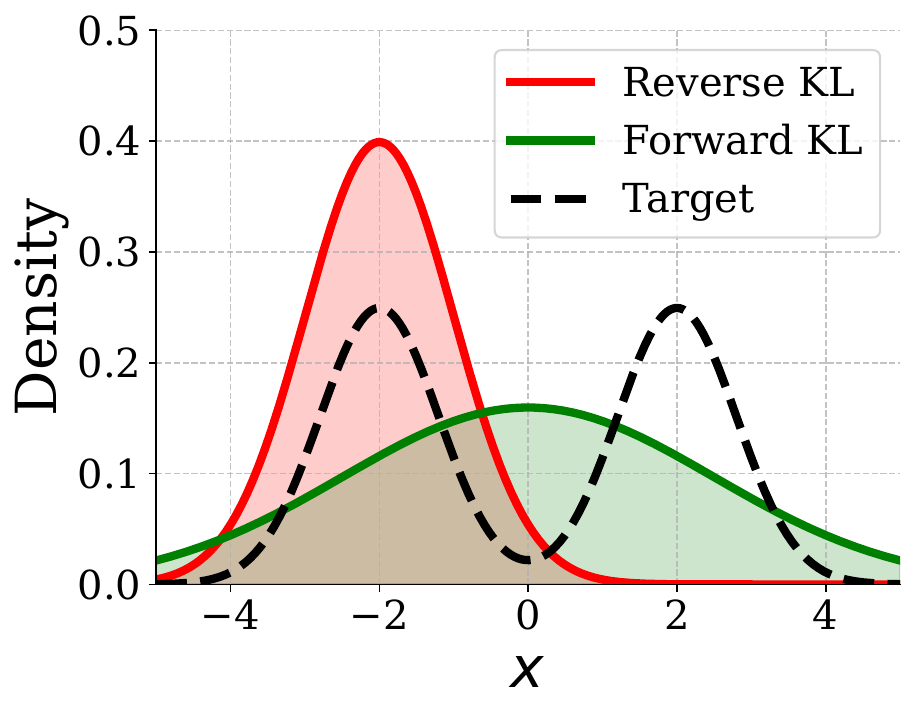}
  \vspace{-0.75cm}
  \caption{The mode seeking and mass covering behaviors of reverse KL and forward KL.}
  \vspace{-0.75cm}
  \label{fig:mode-seeking-mass-covering}
\end{wrapfigure}
The mode-seeking property of reverse KL divergence  tends to reduce diversity in generation~\citep{wiher2022decoding,khalifa2021a,perez2022red,glaese2022improving}, which, although beneficial for optimizing alignment performance, may limit the model's potential~(e.g., user engagement). Additionally, \citet{santurkar2023whose} observe that finetuning LLMs with RLHF under the reverse KL regularization will result in a limited range of political views. 
As a mitigation, the inclusion of various divergences can lead to solutions with distinct characteristics~(e.g., diversity), especially when the model is under-specified.

In this work, we generalize the DPO framework~\citep{rafailov2023direct} to incorporate various divergence constraints. Different from the reverse KL divergence, we initially find that a naive derivation results in an excessively complex relationship between the reward and optimal policy for other divergences, largely due to the normalizing constant. However, by meticulously addressing the Karush–Kuhn–Tucker~(KKT) conditions, specifically the complementary slackness, we demonstrate that for a class of well-known divergences, such as Jensen-Shannon divergences, forward KL divergences and $\alpha$-divergences with $\alpha \in (0, 1)$, the normalizing constant can be eliminated in the Bradley-Terry model. This results in an analytical and elegant mapping between the reward function and the optimal policy for a broad class of divergences. It further enables us to optimize the language model to align with human preferences under varying divergences constraints without needing to estimate the normalization constant. Such flexibility might allow us to explore a richer spectrum of modeling possibilities and cater to diverse application requirements

In conclusion, our key contribution is the generalization of the DPO framework to seamlessly integrate a variety of popular divergences~(e.g., Jensen-Shannon divergence, forward KL divergence and $\alpha$-divergence) for regularization. Empirical results indicate that by adapting different divergence regularizations, we can achieve a nuanced balance between alignment performance~(e.g., reward)
 and the generation diversity. This introduces greater flexibility during fine-tuning processes utilizing human preference datasets. Furthermore, comparative analyses reveal that the generalized DPO framework surpasses PPO-based methods in divergence efficiency. We also prove that the difference in the expected calibration error~(ECE) is bounded by the divergence between them, emphasizing the practical benefits of improved divergence efficiency in model calibration.

\section{Related Works}

\textbf{AI Alignment}~\citep{leike2018scalable} is proposed as a research agenda aimed at aligning model behavior with human preferences and instruction following. 
Not only has AI alignment been demonstrated to be essential in ensuring safe AI behaviors, but it also improves performance on a variety of downstream tasks~\citep{bai2022training,bai2022constitutional,OpenAI2023GPT4TR,llama2,glaese2022improving}. These tasks include metrics such as helpfulness~\citep{askell2021general}, truthfulness~\citep{lin2022truthfulqa}, and non-offensiveness~\citep{gehman2020realtoxicityprompts}, etc. 
In this context, numerous methodologies have been proposed, including red teaming~\citep{perez2022red,korbak2023pretraining}, reward modeling~\citep{leike2018scalable,gao2023scaling}, supervised fine-tuning, rejection sampling, and reinforcement learning from human/AI feedback~\citep{ziegler2019fine,ouyang2022training,bai2022constitutional}, among others. These methods largely depend on human judgment or a comprehensive set of human-written principles
to provide the supervised signal. For more complex situations 
where humans may be incapable of evaluating, the main approach involves designing mechanisms that utilize AI to assist in evaluation by recursively decomposing the problem. This body of work includes AI debate~\citep{irving2018ai}, iterated amplification~\citep{christiano2018supervising}, and recursive reward modeling~\citep{leike2018scalable}. However, most of these methods typically involve multiple stages of training or complex interaction protocols.
In contrast, we focus on proposing a single-stage algorithm that is simple to implement and computationally efficient in the setup where a human is capable of evaluating.

\textbf{Reinforcement Learning from Human Feedback (RLHF)}~\citep{christiano2017deep,bai2022training,llama2,ouyang2022training} has served as a pivotal method for aligning language models, contributing significantly to the success of ChatGPT~\citep{OpenAI2023GPT4TR}. Nonetheless, RLHF's complexity surpasses that of supervised learning, primarily due to the need for a distinct reward model, the quality of which decisively influences the efficacy of reinforcement fine-tuning. Any errors within this model may be exploited by the language model~\citep{gao2023scaling}. Furthermore, RL algorithms like PPO~\citep{schulman2017proximal} have been proven to be less stable and more memory-intensive than supervised learning~\citep{llama2}. These challenges have spurred interest in alternatives to the RLHF pipeline, such as Reward Ranked FineTuning (RAFT)~\citep{dong2023raft}, Rank Responses to align Human Feedback (RRHF)~\citep{yuan2023rrhf}, and Direct Preference Optimization (DPO)~\citep{rafailov2023direct}. Despite these efforts, all the methods mentioned focus solely on solutions within the confines of reverse KL divergence regularization, leaving the potential advantages of incorporating various other divergences largely unexplored. \citet{go2023aligning} recently attempted to minimize the $f$-divergence for aligning language models. However, this approach still depends on the RLHF pipeline, such as reward model learning, which necessitates the user to define the target distribution, and requires the estimation of the target distribution's normalizing constant—adding more hyperparameters and algorithmic complexity. In contrast, we propose a supervised learning method that incorporates various divergence regularizations. This method does not require the estimation of normalizing constants, the specification of the target distribution, or the use of reinforcement learning~(e.g., model rollouts). Moreover, it does not involve any additional hyperparameters.

\section{Preliminary and Backgrounds}

\subsection{Preliminary}
\textbf{$f$-divergences.} For any convex function $f: \R^+\rightarrow \R$ that satisfies $f(1) = 0$ and $f$ is strictly convex around $1$, then the corresponding $f$-divergence for two distributions $p$ and $q$ is defined as
\begin{align*}
    \fdv(p, q) = \expect_{q(x)}\left[f\left(\frac{p(x)}{q(x)}\right)\right] .
\end{align*}
The $f$-divergence covers a broad class of commonly used divergences, including forward KL divergence, reverse KL divergence, Jensen-Shannon~(JS) divergence and total variation distance, etc, by choosing the specific function $f$. We provide a summarization in Table~\ref{tab:f_divergences_derivatives}.

\textbf{Bradley-Terry Model.} The Bradley-Terry model~\citep{bradley1952rank} has been widely employed for pairwise comparisons. It works by assigning a real-valued "strength" parameter $p_i$ to each item $y_i$, which is then used to compute the probability that item $y_i$ outperforms item $y_j$ in a pairwise comparison by $p(y_i \succeq y_j) = {p_i}/({p_i + p_j})$. Yet, in practice, this model's instantiation usually adopts a specific form, denoted as \begin{align*}
    p(y_i \succeq y_j) = \frac{\exp(r(y_i))}{\exp(r(y_i)) + \exp(r(y_j))},
\end{align*} 
where $r(y_i)$ represents the ``rating" or transformed ``strength" of item $y_i$. 

The Bradley-Terry model can be linked to Gumbel noise in the context of pairwise comparisons. The outcome depends on the comparison of $r(y_i) + \epsilon_{i}$ and $r(y_j) + \epsilon_{j}$, where $\epsilon_{i}$ and $\epsilon_{j}$ are i.i.d Gumbel-distributed noise. If $r(y_i) + \epsilon_{i} > r(y_j) + \epsilon_{j}$, $y_i$ defeats $y_j$. Noting that the difference between two Gumbel variables follows a logistic distribution, we align this with the Bradley-Terry model's formula: $p(y_i \succeq y_j) = {1}/({1 + \exp({(r(y_j) - r(y_i))}}))$, with $p_i = \exp(r(y_i))$ and $p_j = \exp({r(y_j)})$. Thus, the Bradley-Terry model represents a stochastic rank-order model influenced by Gumbel noise, which may justify the use of it for categorical distribution~(e.g., language models).

\textbf{RL from Human Feedbacks.} RLHF takes place after the base model has been pretrained. It comprises three steps: 1) supervised fine-tuning, 2) reward model training, and 3) RL fine-tuning. In the RL fine-tuning process, the following objective is maximized:
\begin{align*}
\expect_{x \sim \data, y\sim \pi_{\vtheta}(\cdot|x)} \left[r_{\varphi}(y|x)\right] - \beta \KL(\pi_{\vtheta}(\cdot|x)|\pref(\cdot|x)),
\end{align*}
where $\data$ represents the dataset of prompts, $r_{\varphi}(\cdot|x)$ stands for the reward function learned using the Bradley-Terry model on the preference dataset, $\pref(\cdot|x)$ is the fixed reference model (typically selected to be the one post supervised fine-tuning), and $\beta$ is the coefficient of the reverse KL divergence penalty. In practice, this objective is equivalent to executing reinforcement learning under the following reward function~\citep{ziegler2019fine,bai2022training,ouyang2022training}:
\begin{align*}
r(\cdot|x) = r_{\varphi}(\cdot|x) - \beta \log\left(\frac{\pi_{\vtheta}(\cdot|x)}{\pref(\cdot|x)}\right).
\end{align*}
\textbf{Directed Preference Optimization~(DPO)}. The original DPO method~\citep{rafailov2023direct} establishes a functional mapping between the reward model and the optimal policy under the reverse KL divergence constraint. This allows for the direct optimization of the policy by reparameterizing the reward function using the policy (i.e., the language model) in a supervised manner,
\begin{align*}
    r(\cdot|x) = \beta \log \frac{\pi_{\vtheta}(\cdot|x)}{\pref(\cdot|x)} + {\beta\log Z(x)}.
\end{align*}
Here, $Z(x)$ is the partition function or the normalizing constant. By plugging the reward into the Bradley-Terry model,
the resulting objective of DPO with reverse KL divergence is given by:
\begin{align}\label{eqn:dpo-obj}
    -\expect_{(x, y_w, y_l)\sim \data} \left[\log \sigma \left(\beta \log \frac{\pi_{\vtheta}(y_w|x)}{\pref(y_w|x)} + \cancel{\beta\log Z(x)} - \beta \log \frac{\pi_{\vtheta}(y_l|x)}{\pref(y_l|x)} - \cancel{\beta\log Z(x)} \right)\right],
\end{align}
where $\sigma$ is the sigmoid function, and the partition functions are cancelled out.

\textbf{Calibration Error. } To measure calibration, it's common to use the Expected Calibration Error (ECE) \citep{guo2017calibration}. Specifically, for any policy $\pi_{\vtheta}(\cdot | x)$, define $\hat{P}_{\pi_{\vtheta}} (x)$ as 
\begin{equation*}
    \hat{P}_{\pi_{\vtheta}} (x) = \pi_{\vtheta}(\hat{y} | x),
\end{equation*}
where $\hat{y}$ is the predicted label of $x$. Given a policy $\pi_{\vtheta}$, $\hat{y}$ is sampled with probability $\pi_{\vtheta}(\hat{y} | x)$. Then, the ECE can be computed by,
\begin{equation*}
   \mathrm{ECE}(\vtheta) = \mathbb{E}_{ \hat{P}_{\pi_{\vtheta}}} [\lvert \mathbb{P}(\hat{Y} = Y| \hat{P}_{\pi_{\vtheta}} = p) - p \lvert].
\end{equation*}
Intuitively, if the model's confidence in its predictions closely matches the probability of its predictions being correct, then the model is well-calibrated. In the realm of LLMs, \citet{OpenAI2023GPT4TR} demonstrate that RLHF degrades the model's calibration.

\subsection{Background: Optimizing for Reverse KL Hurts Diversity}

The finetuning of Large Language Models (LLMs) using Reinforcement Learning from Human Feedback (RLHF) has raised concerns regarding sample diversity. Notably, this process is prone to mode collapse~(see Figure~\ref{fig:mode-seeking-mass-covering} for illustration). This will result in a reduction in the diversity of model outputs, as evidenced by studies from \citet{khalifa2021a}, \citet{perez2022red}, \citet{go2023aligning}, and \citet{glaese2022improving}. One plausible explanation for mode collapse is the shift from supervised to reinforcement learning with reverse KL divergence~\citep{song2023reward}. Additionally, \citet{santurkar2023whose} found that LLMs finetuned with RLHF under reverse KL divergence regularization can express a limited range of political views. Given these observations, there's a clear need to investigate alternative divergence regularization methods to maintain the diversity and integrity of LLM outputs, and understand the tradeoff.

\section{Method: Direct Preference Optimization under $f$-divergence}

During the RL finetuning process, it is common to regularize the finetuned model to stay ``close'' to the original model~(or reference model) measured by the KL divergence. Typically, reverse KL divergence~(i.e., $\KL(\pi_{\vtheta}\|{\pref})$) is a default choice. However, the mode-seeking behavior will lead to low diversity in the generations. Therefore, to balance the alignment performance~(e.g., accuracy or reward)
and diversity, we consider a more broad class of divergence regularization, namely, the $f$-divergence, which covers many commonly used divergences, such as forward KL, reverse KL, JS divergence and $\alpha$-divergence, etc.

\begin{table}[t]
\centering
\caption{Summary of some commonly used $f$-divergences including their derivatives. }%
\vspace{-0.2cm}
\renewcommand{\arraystretch}{1.1} %
\resizebox{1.0\textwidth}{!}{%
\begin{tabular}{@{}lllc@{}}
\toprule
\textbf{$f$-divergence} & $\boldsymbol{f(u)}$ & $\boldsymbol{f'(u)}$ & $ 0 \boldsymbol{ \notin \text{Domain of } f'(u)}$ \\
\midrule
$\alpha$-divergence~($\alpha\in(0, 1)$) & $ (u^{1-\alpha} - (1-\alpha)u - \alpha)/(\alpha(\alpha-1))$ & $ (1-u^{-\alpha})/\alpha$ & \cmark \\
Reverse KL~($\alpha=0$) & $u \log u$ & $\log u + 1$ & \cmark \\
Forward KL~($\alpha=1$) & $- \log u$ & $- {1}/{u}$ & \cmark \\
JS-divergence & $ u \log u - (u+1) \log ((u+1)/2)$ & $ \log(2u/(1+u))$ & \cmark \\
\midrule
Total Variation & $\frac{1}{2} |u - 1|$ & $u > 1 \ ? \frac{1}{2} : - \frac{1}{2}$ & \xmark \\
Chi-squared & $(u - 1)^2$ & $2(u-1)$ & \xmark \\
\bottomrule
\end{tabular}
}
\label{tab:f_divergences_derivatives}
\vspace{-0.2cm}
\end{table}

When given the reward function \( r(y|x) \), the base model is fine-tuned using reinforcement learning to maximize the reward under certain constraints. From an optimization perspective, this step is equivalent to solving the following constrained optimization problem:
\begin{align*}
    \max_{\pi} \expect_{\pi}[r(y|x)] - \beta D_{f}(\pi, \pref) \quad s.t.\quad   \sum_y \pi(y|x) = 1
    \;\;\textnormal{and}\;\;  \pi(y|x) \geq 0 \; \forall y.
\end{align*}
The two constraints are introduced to ensure that the solution is a valid distribution, though in practice we don't need to explicitly deal with them. To solve the constrained problem, we can apply the Lagrange multiplier, which gives us
\begin{align*}
    \!\!\!\!\!\!\mathcal{L}(\pi, \lambda, \alpha) = {\expect_{\pi}[r(y|x)]} - \beta \expect_{\pref}\left[f\left(\frac{\pi(y|x)}{\pref(y|x)}\right)\right] - \lambda \left(\sum_{y}\pi(y|x) - 1\right) + \sum_{y} \alpha(y) \pi(y|x),
\end{align*}
where \(\lambda\) and \(\alpha(y)\) are the dual variables. For such problems, we can derive the closed-form solution for \(\pi^\star\), which optimally solves the above problem:
\begin{align*}
    \pi^\star(y|x) = \frac{1}{Z(x)}\pref(y|x)(f')^{-1}\left(\frac{r(y|x)}{\beta}\right),
\end{align*}
where \(Z(x)\) is the normalization constant, and \((f')^{-1}\) is the inverse function of \(f'\). By solving the equation for \(r(y|x)\), we establish the following relationship between \(r(y|x)\) and \(\pi^\star(y|x)\),
\begin{align*}
    r(y|x) = \beta f'\left(\frac{\pi^\star(y|x)}{\pref(y|x)}\cdot Z(x) \right).
\end{align*}
When \(D_f\) is the reverse KL divergence, we have \(f'(u) = \log u + 1\) from Table~\ref{tab:f_divergences_derivatives}. Thus, \(r(y|x) = \beta \log (\pi^\star(y|x)/\pref(y|x)) + \log Z(x) + 1\). Since the BT model is defined by \(p(y_w \succ y_l) = \sigma(r(y_w|x) - r(y_l|x))\), the \(\log Z(x)\), which appears as an additive term in the reward, will be canceled out for the reverse KL divergence~(see Equation~\ref{eqn:dpo-obj}), but this is not the case for other divergences. This situation is discouraging, as estimating the $Z(x)$ requires multiple samples, which may be computationally expensive and may exhibit high variance if not handled properly.

Fortunately, we will show that, by carefully analyzing the normalization constant $Z(x)$, we can derive a closed-form solution for many other (but not all) divergences as well, without the need to estimate the normalization constant \(Z(x)\). To achieve this, we can first rewrite \(\pi^\star(y|x)\) in the following format using the dual variables \(\lambda\) and \(\alpha(y)\):
\begin{align*}
    \pi^\star(y|x) = \pref(y|x) (f')^{-1}\left(\frac{r(y|x)-\lambda + \alpha(y)}{\beta}\right).
\end{align*}
However, the term \(\alpha(y)\) depends on \(y\), and cannot be canceled out, making it hard to compute. Next, we will show that for a class of \(f\)-divergences, we must have \(\alpha(y)=0\), and thus we can represent the reward using only the trainable policy, the reference policy, and a constant that is independent of \(y\). This result is summarized in the following theorem.
\begin{restatable}[]{thm}{rewardAnalyticalForm}\label{thm:reward}
{If \(\pref(y|x)>0\)  for any valid \(x\)} and \(f'\) is invertible with \(0 \notin \) dom(\(f'\)), the reward class that is consistent with the Bradley-Terry model can be reparameterized using the policy model \(\pi(y|x)\) and a reference model \(\pref(y|x)\) as 
    \begin{align}\label{eqn:reward-f-dpo}
        r(y|x) = \beta f'\left(\frac{\pi^\star(y|x)}{\pref(y|x)}\right) + \textnormal{const.}
    \end{align}
\end{restatable}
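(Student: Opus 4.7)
The plan is to obtain the claim directly from the Karush–Kuhn–Tucker (KKT) conditions of the constrained program
\begin{align*}
    \max_{\pi} \expect_{\pi}[r(y|x)] - \beta D_{f}(\pi, \pref) \quad \text{s.t.} \quad \sum_{y}\pi(y|x) = 1, \;\; \pi(y|x) \geq 0,
\end{align*}
and to show that the non-negativity multipliers $\alpha(y)$ must vanish under the stated hypotheses, which is precisely what allows us to drop the ``mystery'' $y$-dependent term that prevented a clean reparameterization in the earlier derivation.

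First I would write down stationarity of the Lagrangian $\mathcal{L}(\pi,\lambda,\alpha)$ with respect to $\pi(y|x)$, which yields
\begin{align*}
    r(y|x) - \beta f'\!\left(\frac{\pi^\star(y|x)}{\pref(y|x)}\right) - \lambda + \alpha(y) = 0,
\end{align*}
together with primal feasibility, dual feasibility ($\alpha(y)\geq 0$), and complementary slackness $\alpha(y)\pi^\star(y|x)=0$ for every $y$. The goal then reduces to proving $\alpha(y)=0$ for all $y$, since $\lambda$ is independent of $y$ and can be absorbed into the additive constant in \cref{eqn:reward-f-dpo}.

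The key step is to argue that $\pi^\star(y|x) > 0$ for every $y$, after which complementary slackness forces $\alpha(y)=0$. Here I would invoke the two hypotheses of the theorem. Because $\pref(y|x) > 0$, the ratio $\pi^\star(y|x)/\pref(y|x)$ equals zero exactly when $\pi^\star(y|x)=0$; but the stationarity equation requires $f'$ to be evaluated at this ratio, and $0 \notin \mathrm{dom}(f')$ by assumption. Equivalently, at any candidate optimizer $\pi$ with $\pi(y|x)=0$, the derivative $\partial\mathcal{L}/\partial\pi(y|x)$ is unbounded (e.g.\ $f'(u)\to -\infty$ as $u\downarrow 0$ for reverse KL, forward KL, JS, and $\alpha\in(0,1)$ divergences, as one can read off from Table~\ref{tab:f_divergences_derivatives}), so an infinitesimal increase of $\pi(y|x)$ strictly improves the objective, contradicting optimality. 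Hence $\pi^\star(y|x)>0$, and therefore $\alpha(y)=0$.

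Plugging $\alpha(y)=0$ into the stationarity equation gives
\begin{align*}
    r(y|x) = \beta f'\!\left(\frac{\pi^\star(y|x)}{\pref(y|x)}\right) + \lambda,
\end{align*}
and since $\lambda$ does not depend on $y$, it is exactly the ``const.'' appearing in \cref{eqn:reward-f-dpo}. I expect the main obstacle to be making the ``$f'$ barrier at $0$'' argument rigorous in a single line: one has to either invoke an interior-point / barrier-style observation (the feasible direction of increase is strictly improving because $-\beta f'(0^+)=+\infty$) or argue via an approximation sequence that any alleged minimizer with $\pi^\star(y|x)=0$ is dominated by a slightly perturbed distribution. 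Verifying the column ``$0\notin \mathrm{dom}(f')$'' of Table~\ref{tab:f_divergences_derivatives} for each of the divergences to which we wish to apply the theorem is then a routine check and can be deferred to a corollary.
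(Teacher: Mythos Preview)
Your proposal is correct and follows essentially the same route as the paper: write down the KKT conditions for the constrained program, use stationarity to express $r(y|x)$ in terms of $f'(\pi^\star/\pref)$, $\lambda$, and $\alpha(y)$, and then invoke complementary slackness together with the hypotheses $\pref(y|x)>0$ and $0\notin\mathrm{dom}(f')$ to force $\alpha(y)=0$. If anything, your write-up is slightly more explicit than the paper's at the crucial step, since you spell out the ``barrier at $0$'' reasoning for why $\pi^\star(y|x)>0$, whereas the paper simply asserts that $\alpha(y)=0$ follows from the two hypotheses and complementary slackness.
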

Theorem~\ref{thm:reward} holds mainly due to the complementary slackness in the KKT conditions, and the proof can be found in Appendix~\ref{app:proof-reward-theorem}. Moreover, the requirement $0 \notin \text{dom}(f)$ already covers many commonly used divergences, including forward KL divergence, Jensen-Shannon divergence, reverse KL divergence, and $\alpha$-divergence with $0 < \alpha < 1$. Lastly, the constant is independent of $y$ and appears as an additive term  in the reward function, and thus it will be cancelled out when plugged into the Bradley-Terry model.

Now, for a pair of examples $(x, y_w)$ and $(x, y_l)$, we can plug the reward from Equation~\ref{eqn:reward-f-dpo} into the Bradley-Terry model, which gives us the following expression,
\begin{align*}
    p(y_w \succeq y_l|x) = \sigma\left(\beta f'\left(\frac{\pi^\star(y_w|x)}{\pref(y_w|x)}\right)  -\beta f'\left(\frac{\pi^\star(y_l|x)}{\pref(y_l|x)}\right)\right).
\end{align*}
Hence, for a preference dataset $\data$, we train the model $\pi_\vtheta$~(replacing $\pi^\star$ in the above equation) by minimizing the following negative log-likelihood loss,
\begin{align}\label{eqn:objective}
    \mathcal{L}(\vtheta, \mathcal{D}) = \expect_{(x, y_w, y_l)\sim \data}\left[ -\log \sigma\left(\beta f'\left(\frac{\pi_\vtheta(y_w|x)}{\pref(y_w|x)}\right)  -\beta f'\left(\frac{\pi_\vtheta(y_l|x)}{\pref(y_l|x)}\right)\right) \right].
\end{align}

The above helps us to solve the RL finetuning problem via a supervised learning approach under a broad class of divergence constraints, which is more stable and efficient to optimize in contrast to the reinforcement learning counter-part. Our results generalize the DPO~\citep{rafailov2023direct} to a more broad class of divergence regularization. The full algorithm is summarized in Algorithm~\ref{alg:f-dpo}.

\begin{algorithm}[t]
\caption{Direct Preference Optimization with $f$-divergences (DPO-$f$)}\label{alg:f-dpo}
\begin{algorithmic}[1]
\REQUIRE Preference dataset $\data$, batch size $b$, constraint coefficient $\beta$, divergence function $f$, and learning rate $\eta$.
\STATE Initialize model $\pi_{\vtheta_0}$ with supervised finetuning on $\data$.
\FOR{$n=1\ldots N$ iterations}
    \STATE Sample a batch $\mathcal{B} = \{(x_i, y_i^w, y_i^l)\}_{i=1}^b$ from $\data$.
    \STATE Compute the loss using the equation~\ref{eqn:objective} with the chosen function $f$.
    \STATE Compute the gradient and update the model $\vtheta_t \leftarrow \vtheta_{t-1} - \eta \nabla_\vtheta \mathcal{L}(\vtheta_{t-1}, \mathcal{B})$.
\ENDFOR
\RETURN Final model $\pi_{\vtheta}$.
\end{algorithmic}
\end{algorithm}

\section{Experiments}

\subsection{Experimental Setup}

\textbf{Baselines and Datasets. } For the experiments, we adopt three datasets, including IMDB-sentiment dataset~\citep{maasHLT2011}, Anthropic HH dataset~\citep{bai2022training} and MT-bench~\citep{zheng2023judging} for evaluation. 
Our primary baseline approach is PPO with different $f$-divergences, which may cover the methods proposed in \citet{go2023aligning}. Upon experimentation~(see Appendix~\ref{app:instable-ppo}), it was observed that incorporating a divergence penalty in the reward for variants of PPO, such as those with forward KL divergence and JS divergence, induces training instability. The reason being that the value ranges of these divergences are considerably larger than those of reverse KL divergence. This discrepancy causes substantial challenges when trying to learn a accurate value function in PPO. To address the issue, we further propose a modified variant of PPO as an additional baseline. Instead of integrating the divergence penalty into the reward function, we treat it as a regularization term separately. The new objective~(i.e.,  PPO (loss)) is optimized separately by PPO and SGD,
\begin{align*}
&\textnormal{PPO (reward):}& \underbrace{\expect_{x \sim \data, y\sim \pi_{\vtheta}(\cdot|x)} \left[r_{\varphi}(y|x) - \beta f (\pi_{\vtheta}(y|x)/\pref(y|x))\right]}_{\textnormal{Optimized by PPO}},\qquad\quad\\
&\textnormal{PPO (loss):}& \underbrace{\expect_{x \sim \data, y\sim \pi_{\vtheta}(\cdot|x)} \left[r_{\varphi}(y|x)\right]}_{\textnormal{Optimized by PPO}} - \underbrace{\beta D_{f}(\pref(\cdot|x), \pi_{\vtheta}(\cdot|x))}_{\textnormal{Optimized by SGD}}.\qquad\quad
\end{align*}
For the sake of differentiation, we term the conventional PPO method as PPO (reward) – indicating the inclusion of divergence penalty in the reward. In contrast, the variant where the divergence regularization is treated separately is denoted as PPO (loss).

\subsection{Experiments on IMDB Dataset}\label{sec:imdb-exps}
\begin{figure}[t]
    \centering
    \begin{subfigure}{0.24\textwidth}
        \includegraphics[width=\textwidth]{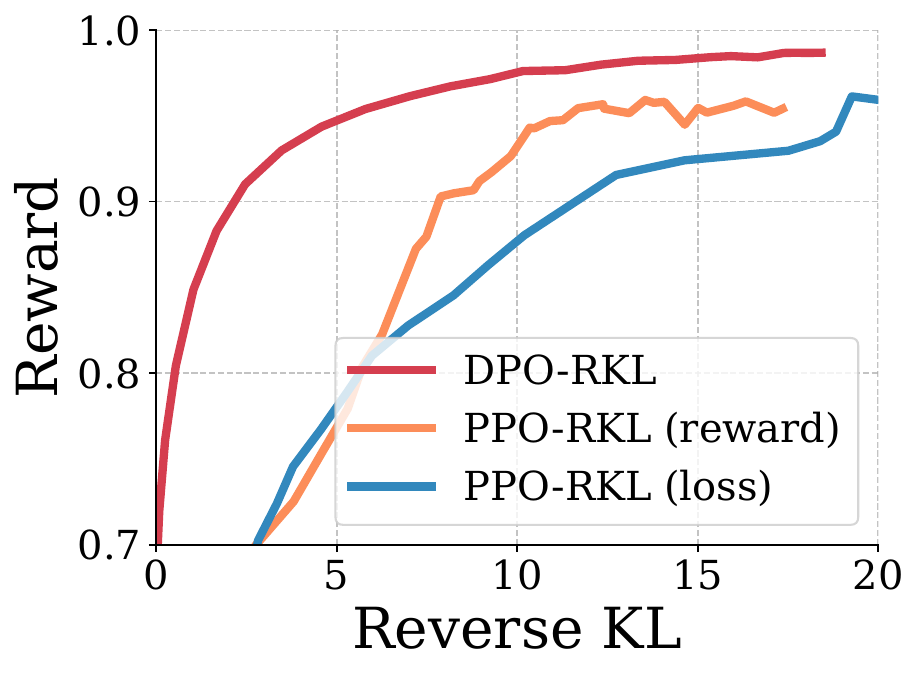}
    \end{subfigure}
    \hfill
    \begin{subfigure}{0.24\textwidth}
        \includegraphics[width=\textwidth]{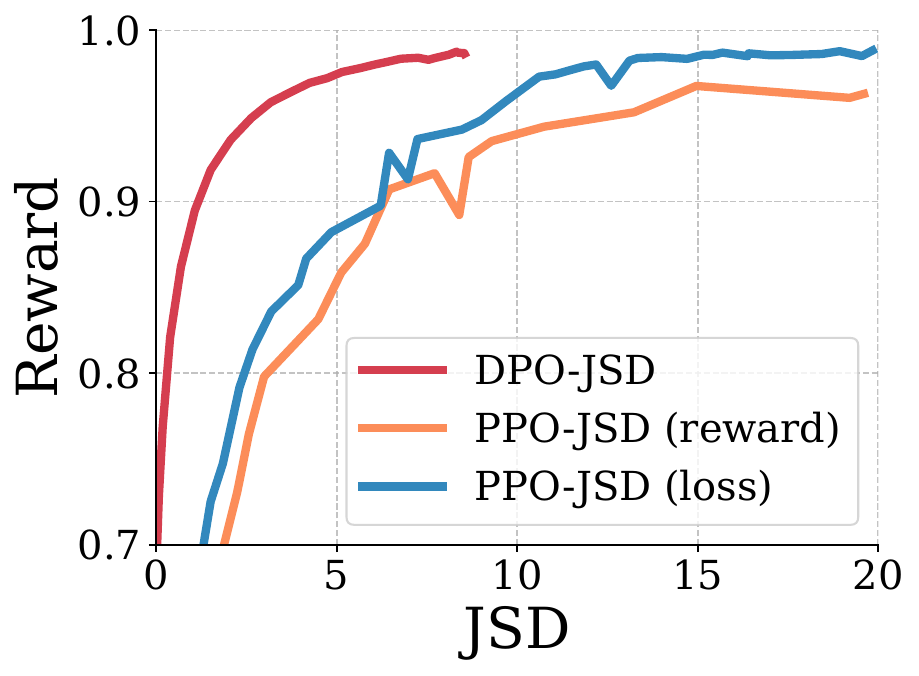}
    \end{subfigure}
    \hfill
    \begin{subfigure}{0.24\textwidth}
        \includegraphics[width=\textwidth]{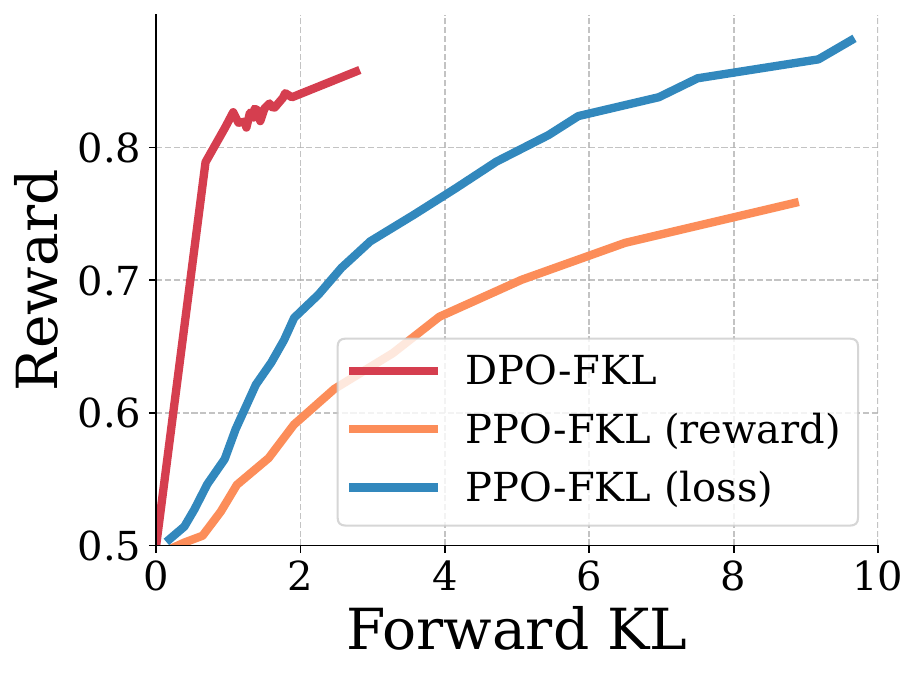}
    \end{subfigure}
    \hfill
    \begin{subfigure}{0.24\textwidth}
        \includegraphics[width=\textwidth]{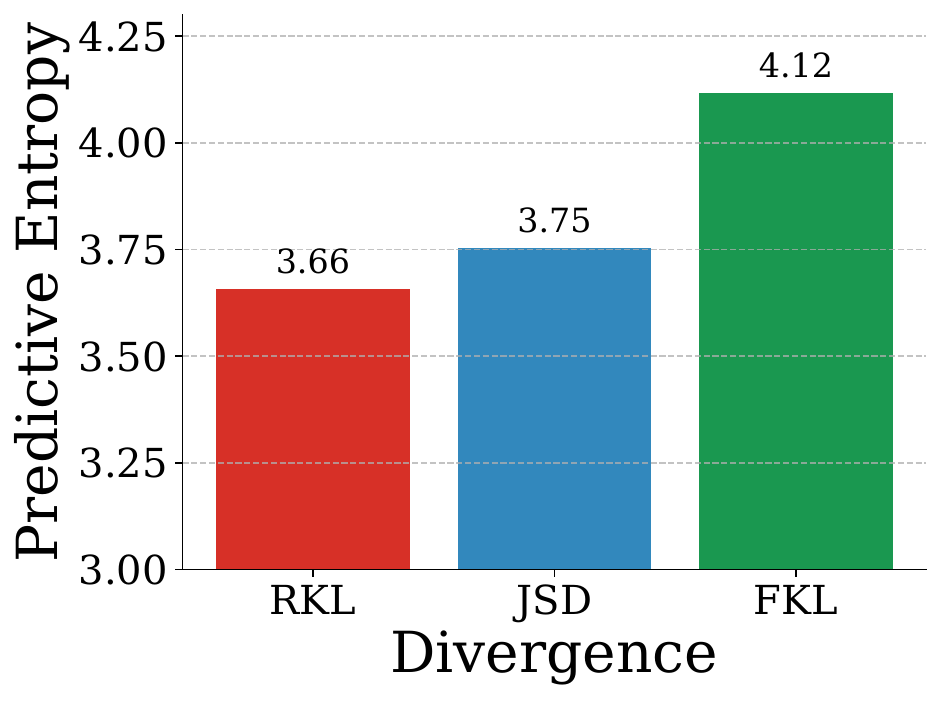}
    \end{subfigure}
    \hfill
    \begin{subfigure}{0.24\textwidth}
        \includegraphics[width=\textwidth]{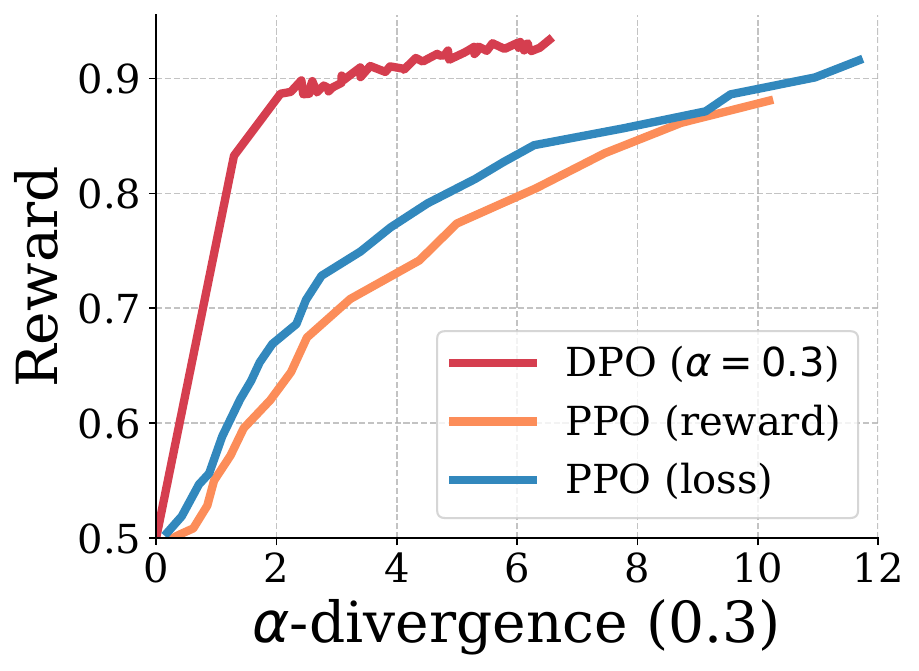}
    \end{subfigure}
    \hfill
    \begin{subfigure}{0.24\textwidth}
        \includegraphics[width=\textwidth]{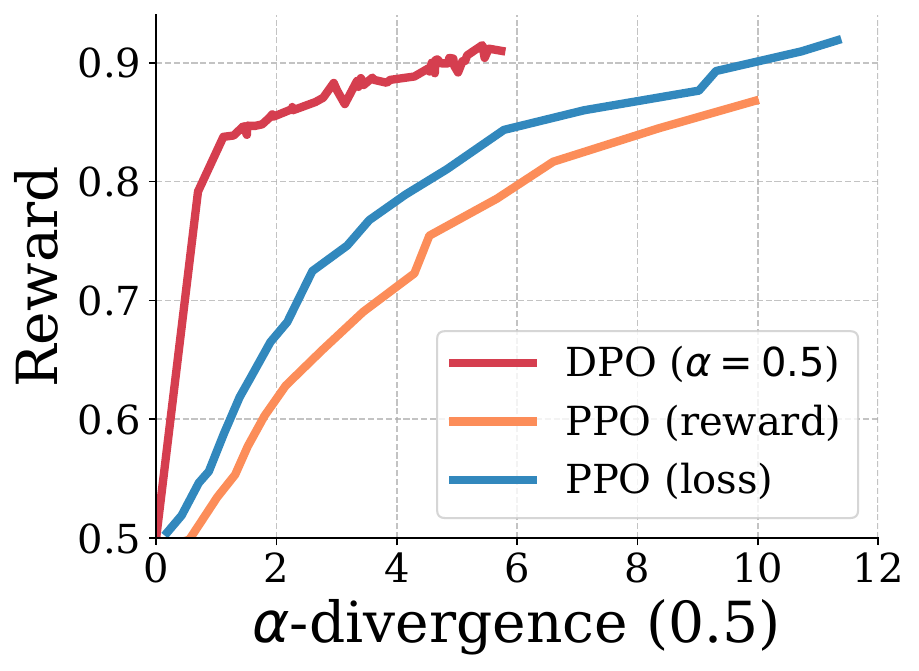}
    \end{subfigure}
    \hfill
    \begin{subfigure}{0.24\textwidth}
        \includegraphics[width=\textwidth]{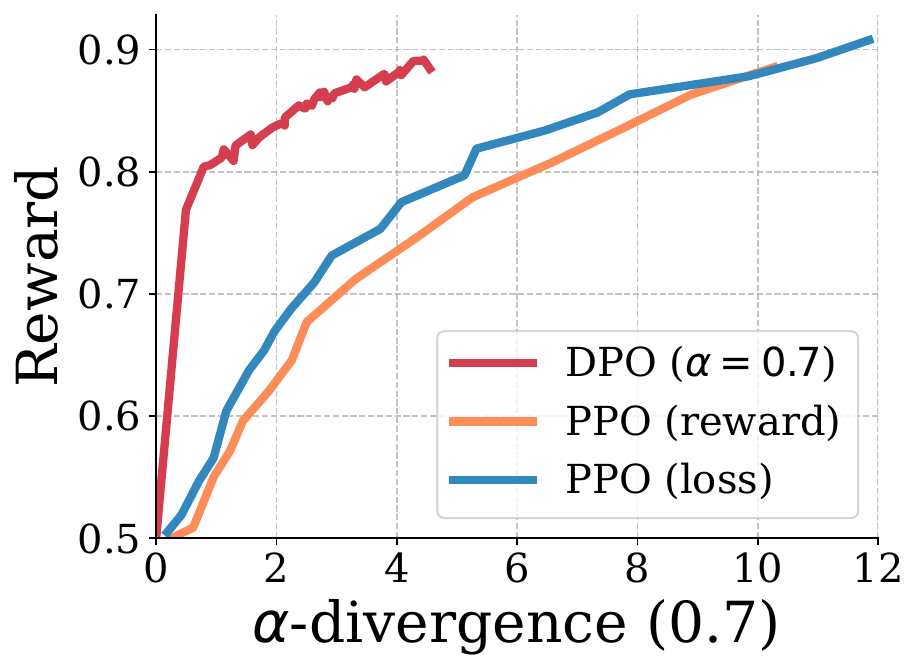}
    \end{subfigure}
    \hfill
    \begin{subfigure}{0.24\textwidth}
        \includegraphics[width=\textwidth]{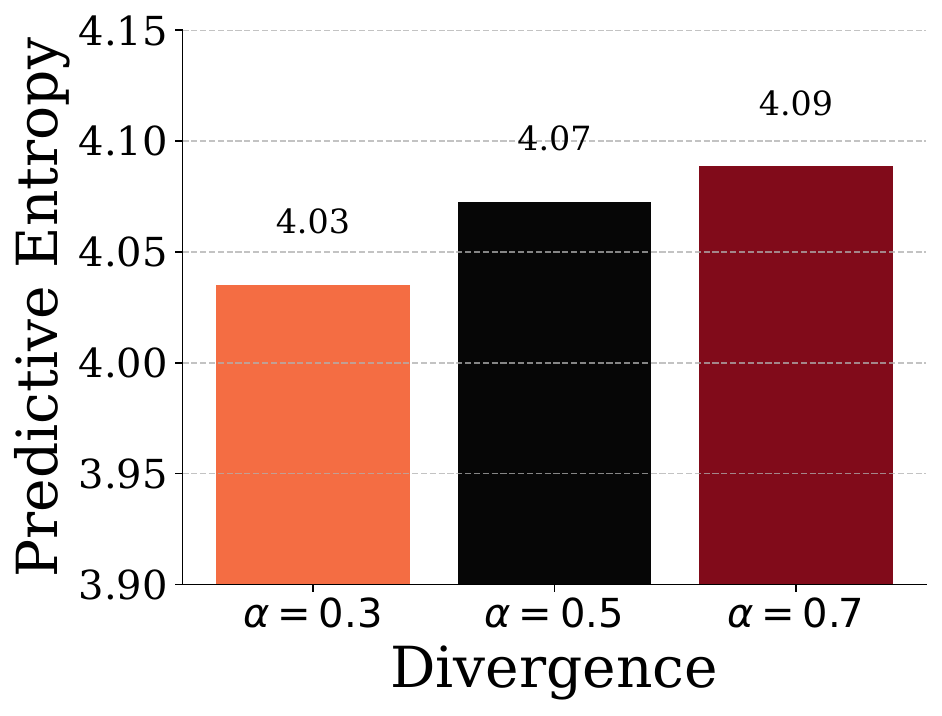}
    \end{subfigure}
    \caption{Comparisons between DPO with various $f$-divergences and PPO in terms of the frontier of divergence vs reward. To be noted, `reward' means we add the divergence penalty in the reward, and `loss' means we add the divergence penalty in the loss. }\label{fig:imdb-comparisons}
    \vspace{-0.5cm}
\end{figure}
Our initial experiments were performed on the IMDB-sentiment dataset~\citep{maasHLT2011} for comparing $f$-DPO against PPO. Following the setup in trlx~\citep{trlx2023}, we utilized GPT-2-large~\citep{radford2019language} as our base model for fine-tuning, and the SiEBERT model~\citep{hartmann2023}—a fine-tuned model of RoBERTa-large~\citep{Liu2019RoBERTaAR}—was employed for reward computation. For PPO, we explored the divergence coefficient in $\{0.01, 0.03, 0.1, 0.3\}$ for both PPO variants, each using ground-truth rewards. Our PPO implementation is based on the trlx library. Additionally, we adapted the official implementation of DPO with $f$-divergences from \citet{rafailov2023direct}, setting $\beta$ at $0.1$. Please note that PPO utilizes the ground-truth reward during training.

\begin{wrapfigure}{r}{0.3\textwidth}
\vspace{-0.5cm}
  \centering
  \!\!\!\! \includegraphics[width=0.31\textwidth]{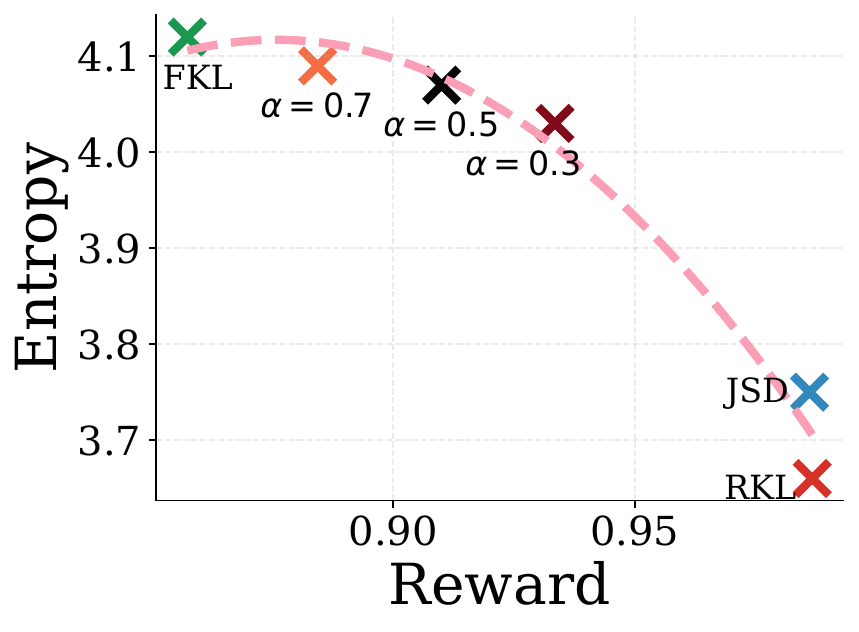}
  \vspace{-0.75cm}
  \caption{The reward and entropy tradeoff of $f$-DPO for different divergences.}
  \vspace{-0.5cm}
  \label{fig:imdb-divergence-tradeoff}
\end{wrapfigure}

The results are depicted in Figure~\ref{fig:imdb-comparisons}, and we also visualize the tradeoff curve in Figure~\ref{fig:imdb-divergence-tradeoff}. Our observations indicate that DPO with $f$-divergences outclasses both PPO implementations in terms of divergence versus reward on the frontier, thus establishing greater divergence-efficiency. Additionally, PPO with a divergence penalty in loss (i.e., PPO (loss)) surpasses PPO with a divergence penalty in reward for both JSD, forward KL and $\alpha$-divergences. This discrepancy arises due to the JSD divergence penalty and forward KL penalty fluctuating more significantly than the reverse KL penalty, which consequently introduces instability in learning the value function in PPO. Further experimental results can be found in the Appendix~\ref{app:instable-ppo}. Finally, we note that reverse KL achieves the lowest predictive entropy due to its mode-seeking property, while Forward KL exhibits the highest predictive entropy. JSD maintains a balance between the two. $\alpha$-divergence interpolates between the JSD and forward KL divergence. 
This observation aligns with the property of these divergence regularization as well as those in \citet{go2023aligning}, although our framework does not rely on reinforcement learning, nor necessitates manual specification of the target distribution or estimation of the normalizing constant under various divergence regularizations.

\subsection{Experiments on Anthropic HH Dataset and MT-bench}\label{sec:hh-exps}

Our next set of experiments was conducted on the Anthropic HH dataset~\citep{bai2022training}. We adopted the Pythia 2.8B model from \citet{biderman2023pythia} as our base model. The training configuration follows from \citet{rafailov2023direct}\footnote{\url{https://github.com/eric-mitchell/direct-preference-optimization}}. The goals of these experiments were to study: 
1) how different divergence regularizations impact the trade-off between alignment and diversity in the generated responses, and 
2) how $f$-DPO compares to its PPO counterparts. 
For the first part of the experiments, we utilized automatic metrics for evaluation, while for the second part, we relied on the GPT-4 evaluation. For PPO, we adopt the PPO (loss).

The results, in terms of alignment accuracy and diversity, are presented in Table~\ref{tab:comparison}. To measure diversity, we generated $25$ responses using  nucleus sampling~\citep{holtzman2019curious} with $p = 0.95$ for each prompt in the test set of the Anthropic HH dataset using temperatures of ${0.6, 1.0, 1.4}$, following \citet{llama2}. The results for temperatures $0.6$ and $1.4$ can be found in Appendix~\ref{app:diversity-results-with-different-temperatures}. To compute metrics, we employed the predictive entropy, self-bleu~\citep{zhu2018texygen} and distinct-n~\citep{li2015diversity}. Consistent with the findings in section~\ref{sec:imdb-exps}, we observed that reverse KL divergence achieves the highest accuracy but the lowest diversity in generation. Adjusting the divergence regularization allows us to trade-off between alignment accuracy and diversity.  \begin{wrapfigure}{r}{0.4\textwidth}
  \centering
  \vspace{-0.45cm}
  \!\!\!\! \!\!\!\! \!\!\!\! \includegraphics[width=0.42\textwidth]{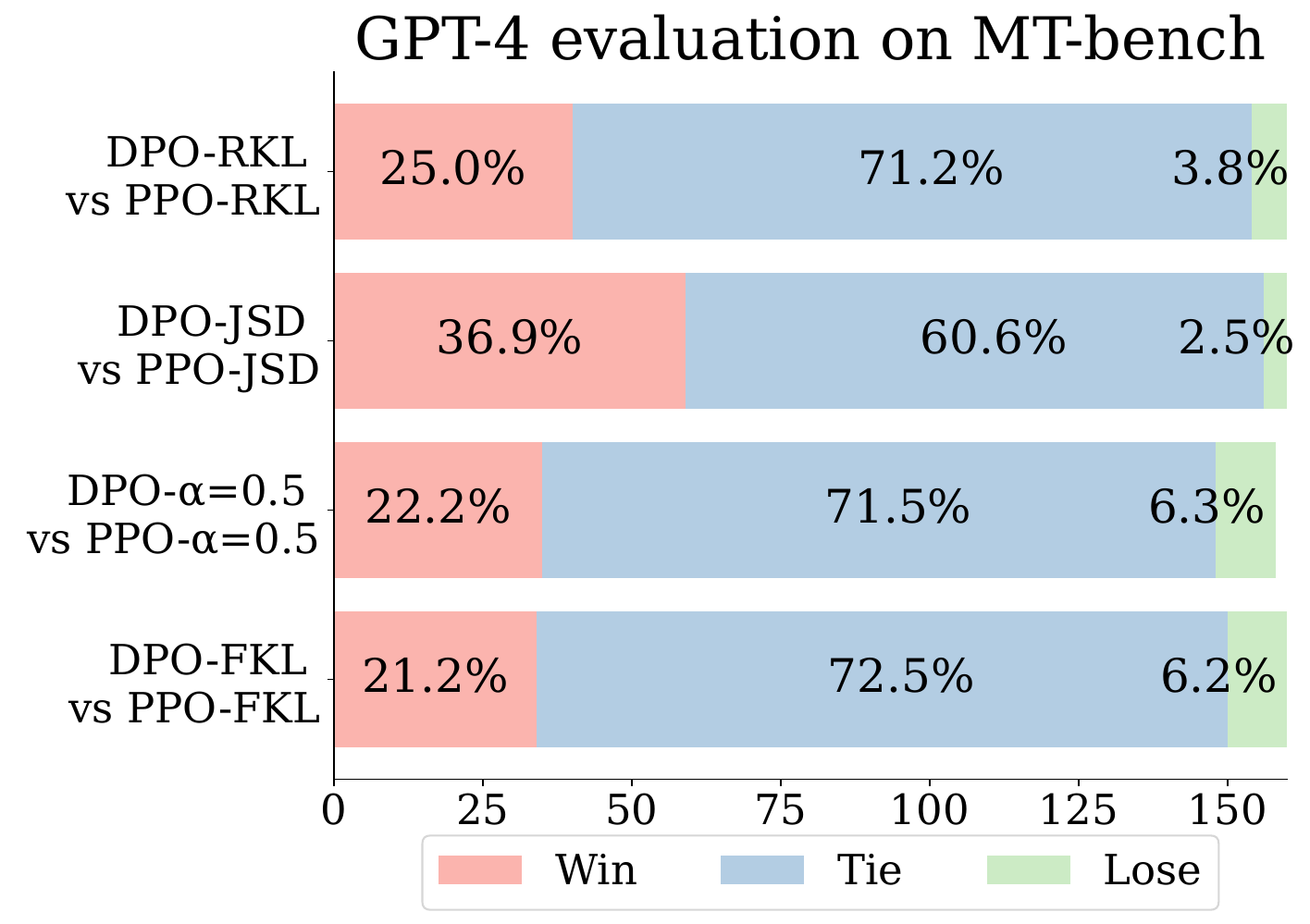}
  \vspace{-0.3cm}
  \caption{MT-Bench comparison between $f$-DPO and PPO under different divergences. The win, tie and lose rates are evaluated based on GPT-4.}
  \vspace{-0.5cm}
  \label{fig:gpt4-eval}
\end{wrapfigure}
By tailoring outputs to specific application needs, it offers a customizable balance between accuracy and diversity. This might not only enhance robustness against unfamiliar inputs but also boost user engagement by preventing monotonous interactions.  %

To compare \( f \)-DPO with PPO in terms of generation quality, we conducted a pairwise comparison using MT-Bench~\citep{zheng2023judging}. MT-Bench is a GPT-4-based evaluation benchmark for LLMs that achieves over \( 80\% \) agreement with human preference judgments on LLM generation quality. MT-Bench includes a series of open-ended questions that assess LLM capabilities in multi-turn conversations and instruction-following, which are often considered important factors for human preference. In accordance with the official MT-Bench implementation~\citep{zheng2023judging},\footnote{\url{https://github.com/lm-sys/FastChat/blob/main/fastchat/llm_judge/gen_model_answer.py}} we sampled responses with a temperature setting of \( 0.7 \) and limited the maximum number of newly generated tokens to \( 1024 \). For additional details about MT-bench, we refer readers to the original paper. The GPT-4 evaluation results are provided in \cref{fig:gpt4-eval}. From these results, it is evident that extending DPO with \( f \)-divergence achieves performance that is comparable to, and in some cases significantly better than, that of PPO. We've also provided the comparisons between DPO under different $f$-divergence in ~\cref{app:gpt-4-additioanl-evalaution}.

\begin{table}[t]
\centering
\vspace{-0.1cm}
\caption{Comparison of JSD, RKL, FKL and some $\alpha$-divergences in terms of Alignment and Diversity on Anthropic HH. The $\uparrow$ indicates that higher is better, and $\downarrow$ means lower is better.}
\vspace{-0.2cm}
\begin{tabular}{@{}l|c|cccc@{}}
\toprule
    \multirow{2}{*}{\textbf{Divergences}} & \multicolumn{1}{c|}{\textbf{Alignment}} & \multicolumn{4}{c}{\textbf{Diversity}} \\
 & \small{Accuracy (\%) $\uparrow$} & \small{Entropy $\uparrow$} & \small{Self-Bleu $\downarrow$} & \small{Distinct-1 $\uparrow$} & \small{Distinct-2 $\uparrow$} \\
\midrule
{RKL} &    \textbf{67.19}  & 12.25  &  0.880  &   0.021 & 0.151   \\
{JSD} &    66.80   & 12.31 &  0.878  &   0.021  &  0.159  \\
\textbf{$\alpha=0.3$} &   59.77  &  12.85  & 0.849   & 0.026   &  0.199  \\
\textbf{$\alpha=0.5$} &   61.72    &  12.90 &  0.841  &  0.028  &  0.206  \\
\textbf{$\alpha=0.7$}  &  57.42   &  12.98  &  0.839  &   0.027  & 0.202   \\
{FKL} &   54.30   &  \textbf{13.01}  & \textbf{0.834}   &  \textbf{0.029}  &  \textbf{0.210}  \\
\bottomrule
\end{tabular}
\vspace{-0.6cm}
\label{tab:comparison}
\end{table}

\subsection{Experiments on Calibration}

In our last set of experiments, we sought to explore the advantages of divergence efficiency. Our observations revealed that \(f\)-DPO achieves a smaller divergence than PPO while attaining comparable performance. Previous studies, as referenced by \citet{OpenAI2023GPT4TR}, suggest that RLHF adversely affects the calibration performance of GPT-4. This prompts the question: is there a correlation between the divergence of the base model and that of the finetuned version, and the calibration error? To elucidate this, we presented the following theorem,

\begin{restatable}[]{thm}{calibrationBound}
\label{thm:calib}
Suppose $\pi_{\vtheta_1}(\cdot | x)$ and $\pi_{\vtheta_2}(\cdot | x)$ be two policies. Let $D_f$ be any $f$-divergence such that $f$ is strictly convex.
\begin{align*}
    \mathrm{ECE}(\vtheta_1) - \mathrm{ECE}(\vtheta_2)
    &\leq 2\mathbb{E}_{X}[\psi_f (D_f(\pi_{\vtheta_1}(\cdot | x), \pi_{\vtheta_2}(\cdot | x))) ] 
\end{align*}
where $\psi_f$  is a real-valued function such that $\lim_{x \downarrow 0} \psi_f(x) = 0$.
\end{restatable}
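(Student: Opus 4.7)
The plan is a two-step reduction: first bound the ECE difference by (twice) the expected total variation distance between the two policies' predictive distributions, then convert this TV bound into an $f$-divergence bound via a Pinsker-style inequality applied pointwise inside the expectation over $X$.

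For the first step, I would view $\mathrm{ECE}(\vtheta)$ as an integral $\int h \, d\mu_{\vtheta}$ of a $[0,1]$-valued functional $h$ against the joint law $\mu_{\vtheta}$ of $(\hat{P}_{\pi_{\vtheta}}, \mathbf{1}[\hat{Y}=Y])$ under policy $\vtheta$ (the reliability-diagram representation). The standard inequality $|\int h\, d\mu_1 - \int h\, d\mu_2| \leq 2\lVert h\rVert_\infty \cdot \lVert \mu_1 - \mu_2 \rVert_{\mathrm{TV}}$ then gives $|\mathrm{ECE}(\vtheta_1) - \mathrm{ECE}(\vtheta_2)| \leq 2\lVert \mu_{\vtheta_1} - \mu_{\vtheta_2} \rVert_{\mathrm{TV}}$, which accounts for the factor of $2$ in the theorem. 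Since the confidence $\hat{P}_{\pi_{\vtheta}}(x) = \pi_{\vtheta}(\hat{y}\mid x)$ and the sampled prediction $\hat{Y} \sim \pi_{\vtheta}(\cdot \mid X)$ are both measurable functions of $X$ and the randomness in drawing $\hat{Y}$, the data-processing inequality for TV gives $\lVert \mu_{\vtheta_1} - \mu_{\vtheta_2}\rVert_{\mathrm{TV}} \leq \mathbb{E}_X\bigl[\lVert \pi_{\vtheta_1}(\cdot \mid X) - \pi_{\vtheta_2}(\cdot \mid X)\rVert_{\mathrm{TV}}\bigr]$.

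For the second step, I would invoke the classical fact that for any strictly convex $f$ with $f(1)=0$, there exists a real-valued function $\psi_f$ with $\lim_{t \downarrow 0} \psi_f(t) = 0$ such that $\lVert p - q \rVert_{\mathrm{TV}} \leq \psi_f(D_f(p,q))$. This follows from the joint-range characterization of $f$-divergences (Csisz\'ar, Gilardoni); the familiar instance is Pinsker's inequality $\psi_f(t) = \sqrt{t/2}$ for reverse KL, with analogous bounds for JS, forward KL, and $\alpha$-divergence with $\alpha\in(0,1)$. Applying this bound pointwise in $x$ inside the expectation, and combining with the first step by monotonicity of expectation, yields the claimed inequality.

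The main obstacle is making the first step rigorous, because the binning variable $\hat{P}_{\pi_{\vtheta}}$ itself depends on $\vtheta$: the conditional expectations $\mathbb{P}(\hat{Y}=Y \mid \hat{P}_{\pi_{\vtheta_1}} = p)$ and $\mathbb{P}(\hat{Y}=Y \mid \hat{P}_{\pi_{\vtheta_2}} = p)$ are taken with respect to different measures, so the integrand cannot be compared term-by-term by naive triangle inequality. The reliability-diagram representation sidesteps this by expressing each ECE as the integral of a fixed bounded functional against its own joint law, converting the problem into the stability of a bounded linear functional under TV perturbation, which is exactly what the $2\lVert h\rVert_\infty \cdot \lVert \mu_1 - \mu_2\rVert_{\mathrm{TV}}$ inequality handles. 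A secondary subtlety is that $\psi_f$ is only guaranteed to be continuous at $0$ and not globally monotone for every $f$, but since we apply it inside an expectation over $x$ rather than to a single averaged quantity, only pointwise evaluation is needed.
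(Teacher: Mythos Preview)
Your proposal contains two genuine gaps, and the second is fatal to the argument as written.

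\textbf{Gap 1.} $\mathrm{ECE}(\vtheta)$ is \emph{not} of the form $\int h\,d\mu_{\vtheta}$ for a fixed $[0,1]$-valued $h$. The integrand $\lvert\,\mathbb{P}(\hat Y = Y \mid \hat P_{\pi_{\vtheta}} = p) - p\,\rvert$ contains a conditional expectation that is itself determined by $\mu_{\vtheta}$, so ECE is a nonlinear functional of the joint law, and the bounded-integrand inequality does not apply. You correctly flag this obstacle but then simply assert that the reliability-diagram representation ``sidesteps'' it; it does not. (This gap alone could be repaired via the dual form $\mathrm{ECE}(\vtheta)=\sup_{\|\phi\|_\infty\le 1}\mathbb{E}_{\mu_{\vtheta}}[\phi(P)(Z-P)]$, a supremum of linear functionals, but you did not invoke it.)

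\textbf{Gap 2.} Even granting Gap~1, the data-processing step fails: the map $(X,Y,\hat Y)\mapsto(\hat P_{\pi_{\vtheta}},\mathbf{1}[\hat Y=Y])$ depends on $\vtheta$ through the confidence coordinate $\hat P_{\pi_{\vtheta}}=\pi_{\vtheta}(\hat Y\mid X)$, whereas data processing requires a \emph{fixed} channel. Counterexample: a single $x$, binary labels, deterministic true label $Y\equiv 0$, and $\pi_{\vtheta_1}=(0.6,0.4)$, $\pi_{\vtheta_2}=(0.4,0.6)$. Then $\|\pi_{\vtheta_1}(\cdot\mid x)-\pi_{\vtheta_2}(\cdot\mid x)\|_{\mathrm{TV}}=0.2$, but $\mu_{\vtheta_1}$ is supported on $\{(0.6,1),(0.4,0)\}$ and $\mu_{\vtheta_2}$ on $\{(0.4,1),(0.6,0)\}$, so $\|\mu_{\vtheta_1}-\mu_{\vtheta_2}\|_{\mathrm{TV}}=1$. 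Replacing $0.6,0.4$ by $0.5\pm\epsilon$ sends the ratio to infinity, so no constant can save the inequality.

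The paper never introduces $\mu_{\vtheta}$. It instead uses the tower rule to obtain the explicit pointwise formula
\[
\mathrm{ECE}(\vtheta)=\mathbb{E}_X\bigl[\langle\,\lvert\pi(\cdot\mid x)-\pi_{\vtheta}(\cdot\mid x)\rvert,\ \pi_{\vtheta}(\cdot\mid x)\,\rangle\bigr],
\]
and bounds the difference of these inner products directly via the elementary identity $uv-wz=\tfrac12(u+w)(v-z)+\tfrac12(u-w)(v+z)$ together with the reverse triangle inequality $\bigl\lvert\,\lvert\pi-\pi_{\vtheta_1}\rvert-\lvert\pi-\pi_{\vtheta_2}\rvert\,\bigr\rvert\le\lvert\pi_{\vtheta_1}-\pi_{\vtheta_2}\rvert$, then H\"older with the $\ell^\infty$ factor bounded by $2$. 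This yields $\mathrm{ECE}(\vtheta_1)-\mathrm{ECE}(\vtheta_2)\le 4\,\mathbb{E}_X[\delta(\pi_{\vtheta_1}(\cdot\mid x),\pi_{\vtheta_2}(\cdot\mid x))]$, after which your second step---the generalized Pinsker inequality bounding TV by $\psi_f(D_f)$---is applied exactly as you describe and is correct.
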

\begin{remark}
For JSD, we have 
  {\small$   \mathrm{ECE}(\vtheta_1) - \mathrm{ECE}(\vtheta_2) \leq \mathbb{E}_{X}\left[ 4\sqrt{ 2 D_{JS}(\pi_{\vtheta_1}(\cdot | x), \pi_{\vtheta_2}(\cdot | x)) }\right]$}.
\end{remark}
\begin{remark}
    For KL, we have {\small$\mathrm{ECE}(\vtheta_1) - \mathrm{ECE}(\vtheta_2) \leq \mathbb{E}_{X}\left[2 \sqrt{2 D_{KL}(\pi_{\vtheta_1}(\cdot | x), \pi_{\vtheta_2}(\cdot | x)) }\right]$}.
\end{remark}
\cref{thm:calib}, detailed further in \cref{append:calib}, establishes a relationship between Expected Calibration Error (ECE) difference and $f$-divergences. Specifically, the difference in ECE between two models can be bounded by the $f$-divergence. Thus, if the base model exhibits good calibration (i.e., small ECE), a smaller $f$-divergence suggests that the finetuned model is similarly well-calibrated.

To validate the theoretical findings, we conducted experiments to explore the impact of various $f$-divergences on calibration errors. We evaluated calibration error on Anthropic HH dataset~\citep{bai2022training} with the Pythia 2.8B model from \citet{biderman2023pythia} as our base model. The model was finetuned using $f$-DPO in the same way as in \cref{sec:hh-exps}. We treat the task as a binary prediction problem. For predictive probabilities, we use the exponentials of normalized scores where the scores are computed for chosen and rejected strings conditioned on input prompts. The results are shown in \cref{fig:ece-trend}. It is apparent that increased regularization parameters can restrict the extent to which the calibration error can increase. On the other hand, as training progresses, the calibration error increases as well. Similar trends on calibration have been discovered in \citet{OpenAI2023GPT4TR} as well.

\begin{figure}[t]
    \centering
    \includegraphics[width=0.9\textwidth]{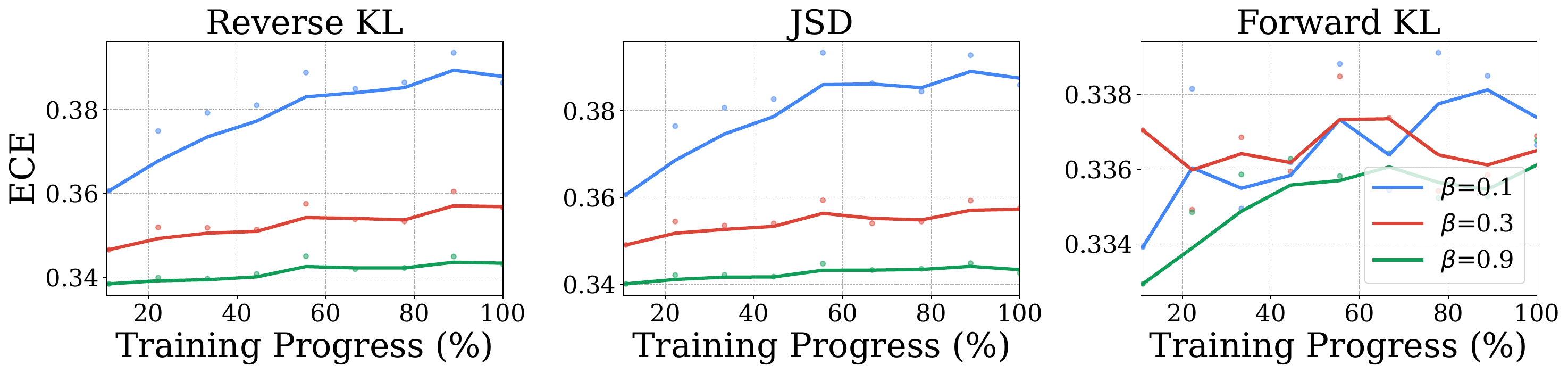}
    \includegraphics[width=0.9\textwidth]{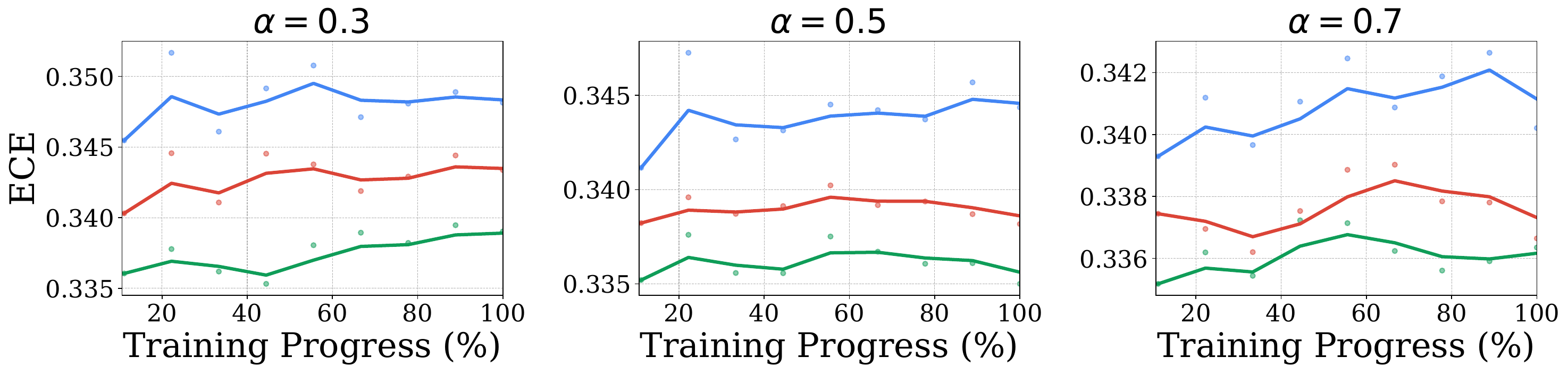}
    \vspace{-0.3cm}
    \caption{Evolution of Expected Calibration Error (ECE) across training steps for three different divergence regularizations: Reverse KL, JSD, Forward KL and $\alpha$-divergence~($\alpha=0.3,0.5$ and $0.7$). Each subplot represents the ECE values for varying regularization parameters $\beta=0.1, 0.3$ and $0.9$ with exponential smoothing.}
    \label{fig:ece-trend}
    \vspace{-0.5cm}
\end{figure}

\section{Conclusion}
In this study, we introduced a generalized framework for DPO, elegantly incorporating a spectrum of divergence constraints. Our empirical results show that while the reverse KL divergence typically offers superior alignment performance, it compromises generation diversity. By adjusting the divergence regularization, we can achieve a nuanced balance between alignment and generation diversity. Notably, the $f$-DPO framework demonstrates greater divergence efficiency than traditional PPO methods. We further established that the difference in the expected calibration error of two models can be bounded by their divergence, underscoring the advantages of enhanced divergence efficiency. Looking forward, we aim to explore the integration of other divergences not well addressed by our present formulation, such as the total variation distance.

\subsubsection*{Acknowledgments}
We thank Shengyang Sun and Haifeng Xu for insightful discussions that helped shape the initial idea of this work. Our gratitude also extends to Eric Mitchell for clarifying numerous questions on the experimental setup in the original DPO paper, and for making the code publicly available.

\bibliography{iclr2024_conference}
\bibliographystyle{iclr2024_conference}

\newpage
\appendix

\title{Appendix}
\theoremstyle{plain}

\section{The Cause of Instability in PPO with Divergence Penalty in Rewards}\label{app:instable-ppo}

\begin{figure}[h]
\centering
\includegraphics[width=1.0\textwidth]{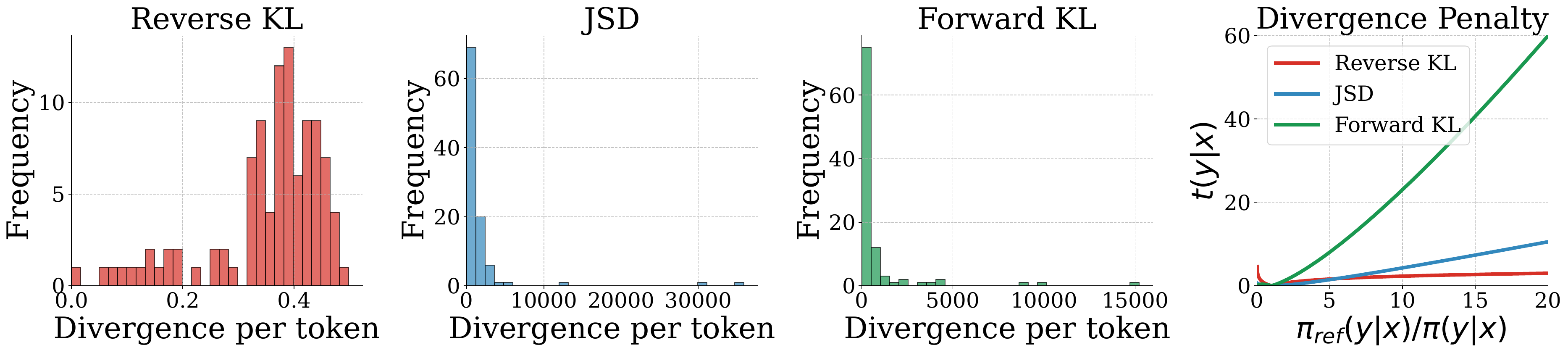}
\caption{Visualization of the divergence penalty for reverse KL, JDS and forward KL.}
\label{fig:divergence_penalty}
\end{figure}

In RLHF, it is a common practice to impose the KL-divergence as part of the reward function. However, this practice can lead to instability in the optimization process for certain types of divergence measures, such as the forward KL divergence, Jensen-Shannon (JS) divergence, etc. To illustrate this, consider the following reward functions:
\begin{align}
    &r_{\textnormal{RKL}}(y|x) = r(y|x) + \beta \cdot  {\color{red}\log t(y|x)}, \label{eq:rkl1} \\
    &r_{\textnormal{JS}}(y|x) = r(y|x) - \beta \cdot {\color{blue}\left(t(y|x) \log t(y|x) - (t(y|x)+1)\log\left(\frac{t(y|x)+1}{2}\right)\right)}, \label{eq:js}\\
     &r_{\textnormal{FKL}}(y|x) = r(y|x) - \beta \cdot {\color{green}t(y|x) \log t(y|x)}, \label{eq:fkl}
\end{align}
where $ t(y|x) = {\pref(y|x)}/{\pi(y|x)}$. Figure~\ref{fig:divergence_penalty} illustrates the curves of the above three divergence penalties as well as the divergence per token computed on the the IMDB dataset during training using PPO with the divergence penalty included in the reward. We oberve that the forward KL divergence penalty will grow much faster than the other two, and the reverse KL divergence penalty grows the slowest. This difference make the reverse KL divergence more numerically stable then the other two and thus makes the learning of the value function much easier.

\section{Implementing PPO under $f$-divergence Constraint}

To implement PPO under various $f$-divergence, we need to estimate the $f$-divergence using samples. Therefore, an unbiased estimator with low variance is desired. Following~\citet{shulman2020kl}, by denoting $r(x)=p(x)/q(x)$ to be the ratio between two distributions $p$ and $q$, then the following estimator for the $f$-divergence between $p$ and $q$ is unbiased and has low variance,
\begin{align*}
    \expect_X\left[f(r(x)) - f'(1) (r(x)-1)\right].
\end{align*}
Therefore, we adopt the above estimator for the $f$-divergence when we implement PPO.

\section{Additional Experimental Results}

\subsection{Generation Diversity on Anthropic HH with Different Temperatures}\label{app:diversity-results-with-different-temperatures}
\begin{table}[h]
\centering
\caption{Comparison of divergences on Anthropic HH with temperature$=0.6$.}
\vspace{-0.2cm}
\begin{tabular}{@{}l|ccc@{}}
\toprule
\textbf{Divergences} & \textbf{Self-Bleu $\downarrow$} & \textbf{Distinct-1 $\uparrow$} & \textbf{Distinct-2 $\uparrow$} \\
\midrule
RKL & 0.8667 & 0.0092 & 0.0615 \\
JSD & 0.8679 & 0.0099 & 0.0662 \\
\textbf{$\alpha=0.3$} & 0.8611 & 0.0136 & 0.0899 \\
\textbf{$\alpha=0.5$} & 0.8579 & \textbf{0.0148} & \textbf{0.0950} \\
\textbf{$\alpha=0.7$} & 0.8563 & 0.0139 & 0.0905 \\
FKL & \textbf{0.8515} & 0.0142 & 0.0926 \\
\bottomrule
\end{tabular}
\label{tab:comparison_T0.6}
\end{table}

\begin{table}[h]
\centering
\caption{Comparison of divergences on Anthropic HH with temperature$=1.4$.}
\vspace{-0.2cm}
\begin{tabular}{@{}l|ccc@{}}
\toprule
\textbf{Divergences} & \textbf{Self-Bleu $\downarrow$} & \textbf{Distinct-1 $\uparrow$} & \textbf{Distinct-2 $\uparrow$} \\
\midrule
RKL & 0.7975 & 0.0973 & 0.6574 \\
JSD & 0.7995 & 0.1025 & 0.6439 \\
\textbf{$\alpha=0.3$} & 0.7759 & 0.1107 & 0.6952 \\
\textbf{$\alpha=0.5$} & 0.7603 & 0.1101 & 0.6692 \\
\textbf{$\alpha=0.7$} & \textbf{0.7537} & 0.1151 & 0.6659 \\
FKL & 0.7566 & \textbf{0.1233} & \textbf{0.7082} \\
\bottomrule
\end{tabular}
\label{tab:comparison_T1.4}
\end{table}

In this section, we report the additional results on evaluating the generation diversity of models trained with different divergences. The temperature were set to be $0.6$ and $1.4$ following \citet{llama2}. For each prompt, we sampled $25$ responses. The results can be found in Tables~\ref{tab:comparison_T0.6}  and \ref{tab:comparison_T1.4}. We found that the pattern is similar to the one observed in the main paper, where we set the temperature to be $1.0$.

\subsection{GPT-4 Evaluations on MT-Bench for DPO with Different Divergences}\label{app:gpt-4-additioanl-evalaution}
\begin{figure}[h]
    \centering
    \includegraphics[width=0.5\textwidth]{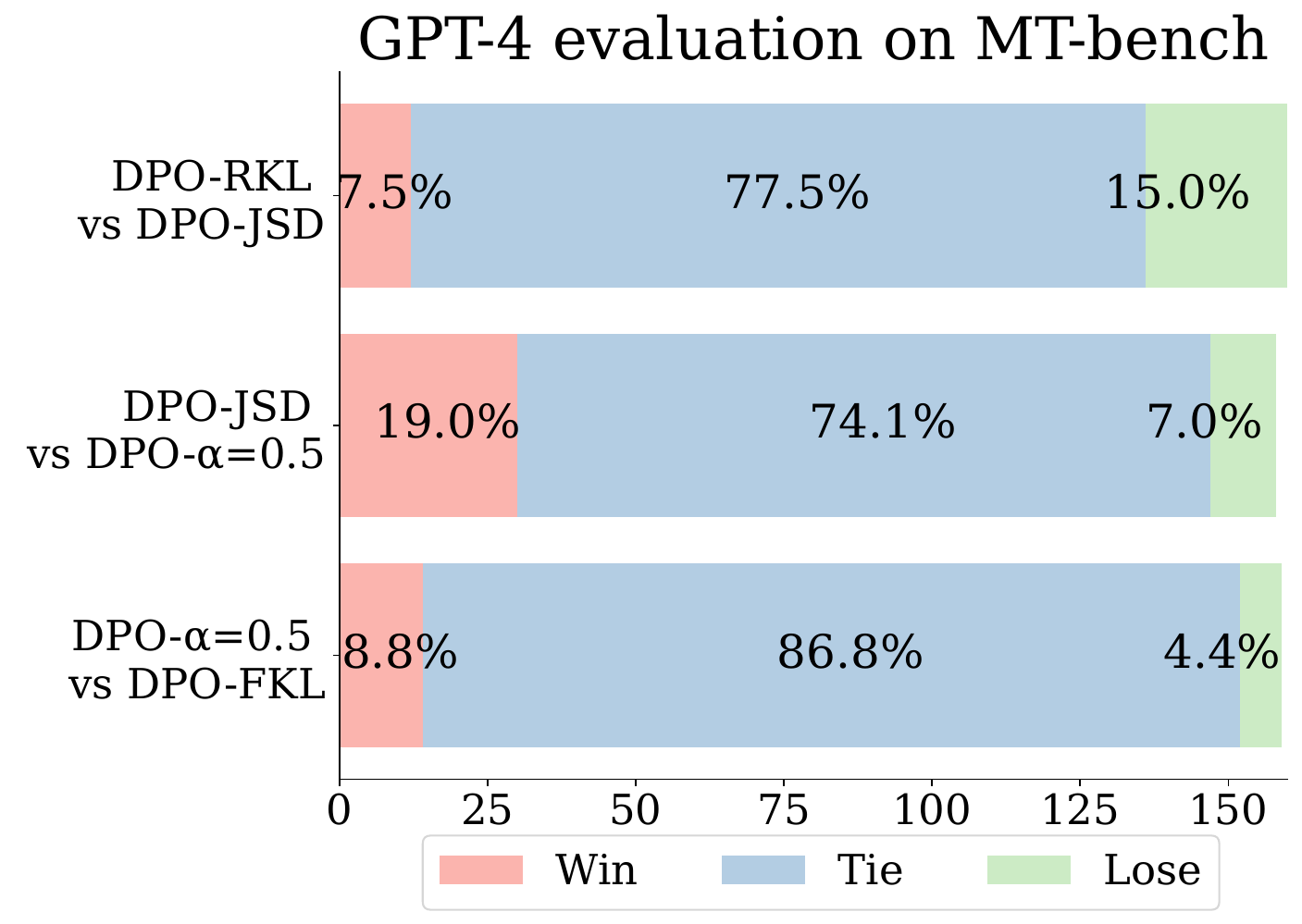}
    \caption{Comparing DPO with different divergence regularizations using GPT-4 on MT-Bench.}
    \label{fig:gpt-mt-bench-dpo-comparisons}
\end{figure}
We further provide a comparison between DPO using different divergences, with GPT-4 serving as the referee on MT-bench. We observe that while DPO with reverse KL outperforms DPO with JSD in terms of alignment accuracy on Anthropic HH, it underperforms when evaluated by GPT-4. Additionally, DPO with JSD performs better than DPO with $\alpha$-divergence. Lastly, DPO with $\alpha$-divergence performs slightly better than DPO with forward KL divergence. These results largely align with our expectations.

\section{Proofs}\label{app:proof-reward-theorem}

\rewardAnalyticalForm*
\begin{proof}
    The Karush-Kuhn-Tucker (KKT) conditions for the given optimization problem can be stated as follows:

\begin{enumerate}
\item \textbf{Stationarity Condition}:

This condition requires that the gradient of the Lagrangian with respect to the primal variables be zero:
\begin{equation*}
    \nabla_{\pi(y|x)} \mathcal{L}(\pi, \lambda, \alpha) = 0, \; \forall y.
\end{equation*}

By setting the derivative of the Lagrangian with respect to $\pi(y|x)$ to zero, we obtain:
\begin{equation*}
    r(y|x) - \beta \frac{\partial}{\partial \pi(y|x)}\expect_{\pref}\left[f\left(\frac{\pi(y|x)}{\pref(y|x)}\right)\right] - \lambda + \alpha(y) = 0, \; \forall y.
\end{equation*}

\item \textbf{Primal Feasibility}:

This condition requires that the solution satisfy the original constraints of the problem:
\begin{equation*}
    \sum_{y} \pi(y|x) = 1, \quad \textnormal{and} \quad \pi(y|x) \geq 0 \; \forall y.
\end{equation*}

\item \textbf{Dual Feasibility}:

This condition requires that the Lagrange multipliers corresponding to inequality constraints are non-negative:
\begin{equation*}
    \alpha(y) \geq 0, \; \forall y.
\end{equation*}

\item \textbf{Complementary Slackness}:

This condition requires that for each inequality constraint, either the constraint is satisfied with equality, or the corresponding Lagrange multiplier is zero:
\begin{equation*}
    \alpha(y) \pi(y|x) = 0, \; \forall y.
\end{equation*}
\end{enumerate}

In this context, $\pi(y|x)$ is the primal variable we're optimizing, $\lambda$ is the Lagrange multiplier for the equality constraint, and $\alpha(y)$ are the Lagrange multipliers for the inequality constraints.

To derive the final solution to the given problem, we first use the stationarity condition to obtain an equation that relates $\pi(y|x)$, $r(y|x)$, $f$, $\lambda$, and $\alpha(y)$. This will involve taking the derivative of $f\left(\frac{\pi(y|x)}{\pref(y|x)}\right)$ with respect to $\pi(y|x)$, which will depend on the specific form of the function $f$. 

We denote the derivative of $f$ with respect to its argument as $f'$, and that the derivative of $r(y|x)$ with respect to $\pi(y|x)$ is zero (since $r(y|x)$ does not explicitly depend on $\pi(y|x)$), we get:
\begin{align*}
r(y|x) - \beta f'\left(\frac{\pi(y|x)}{\pref(y|x)}\right) - \lambda + \alpha(y) = 0.
\end{align*}

We can solve this for $\pi(y|x)$, assuming that the inverse of $f'$ exists:
\begin{align*}
\pi(y|x) = \pref(y|x) f'^{-1}\left(\frac{r(y|x) - \lambda + \alpha(y)}{\beta}\right).
\end{align*}

However, we have to note that this is under the assumptions that $f'$ is invertible, and that the inverse maps the argument into a domain where the original function $f$ is defined and differentiable.

Next, we can use the primal feasibility condition to get an equation that will help determine the values of $\lambda$ and $\alpha(y)$. Substituting the expression for $\pi(y|x)$ into the constraint $\sum_{y} \pi(y|x) = 1$, we obtain an equation that can be solved for $\lambda$:
\begin{align*}
\sum_y \pref(y|x) f'^{-1}\left(\frac{r(y|x)-\lambda + \alpha(y)}{\beta}\right) = 1.
\end{align*}

This equation is likely to be nonlinear and might need numerical methods to solve for $\lambda$ and $\alpha(y)$. Also, we need to ensure $\pi(y|x) \geq 0$ for all $y$, which could place further restrictions on the possible values of $\lambda$ and $\alpha(y)$.

Finally, the complementary slackness condition $\alpha(y) \pi(y|x) = 0$ for all $y$ will eliminate some solutions, because for each $y$, either $\alpha(y) = 0$ or $\pi(y|x) = 0$ must hold.

Therefore, we can write out the reward function as a function of the policy, i.e.,
\begin{align*}
    r(y|x) = \beta f'\left(\frac{\pi(y|x)}{\pref(y|x)}\right) + \lambda - \alpha(y).
\end{align*}
The complementary slackness requires that
\begin{align*}
    \pi(y|x) \alpha(y) = 0 \;\;\forall y.
\end{align*}
Hence, for those function $f$ that $0 \notin$ dom$(f')$ with the assumption that $\pref(y|x) > 0$ almost surely, we must have $\alpha(y) = 0\;\forall y$. In particular, the reverse KL, forward KL and JS divergences are among this category, See Table~\ref{tab:f_divergences_derivatives}. Thus, we can simplify the reward function as,
\begin{align*}
    r(y|x) = \beta f'\left(\frac{\pi(y|x)}{\pref(y|x)}\right) + \lambda ,
\end{align*}
where $\lambda$  depends only on $x$ and thus can be treated as an additive constant, which will be cancelled out in the BT model.

\end{proof}

To give some examples, for reverse KL divergence, we have
\begin{align*}
    r(y|x) = \beta \log \frac{\pi(y|x)}{\pref(y|x)} + \textrm{const}.
\end{align*}
For forward KL divergence, we have
\begin{align*}
    r(y|x) = -\beta \frac{\pref(y|x)}{\pi(y|x)} + \textnormal{const}.
\end{align*}
For JS divergence, we have
\begin{align*}
    r(y|x) = \beta \log \frac{2\pi(y|x)}{\pref(y|x) + \pi(y|x)} + \textnormal{const}.
\end{align*}

\section{On Calibration and $f$-divergence}
\label{append:calib}
To measure calibration, we adopt the definition of Expected Calibration Error (ECE) \citep{guo2017calibration}. Specifically, for any policy $\pi_{\vtheta}(\cdot | x)$, define $\hat{P}_{\pi_{\vtheta}} (x)$ as 
\begin{equation*}
    \hat{P}_{\pi_{\vtheta}} (x) = \pi_{\vtheta}(\hat{y} | x)
\end{equation*}
where $\hat{y}$ is the predicted label of $x$. Given a policy $\pi_{\vtheta}$, $\hat{y}$ is sampled with probability $\pi_{\vtheta}(\hat{y} | x)$.

Then ECE can be defined as follows:
\begin{equation*}
   \mathrm{ECE}(\vtheta) = \mathbb{E}_{ \hat{P}_{\pi_{\vtheta}}} [\lvert \mathbb{P}(\hat{Y} = Y| \hat{P}_{\pi_{\vtheta}} = p) - p \lvert]
\end{equation*}

\begin{remark}
Here, we consider our policy to be stochastic. This is different from the definition in \citep{guo2017calibration} where $\hat{y}$ is chosen to be the one with the highest probabilities. 
\end{remark}

The following theorem bound the differences of ECE by $f$ divergences of policies. 

\calibrationBound*

\begin{proof}
By the tower rule, we know that
\begin{equation*}
\begin{split}
    \mathrm{ECE}(\vtheta) &= \mathbb{E}_{X}\bigg[\mathbb{E}_{ \hat{P}_{\pi_{\vtheta}} | X} \bigg[\Big\lvert \mathbb{P}(\hat{Y} = Y| \hat{P}_{\pi_{\vtheta}} = p, X=x) - p \Big\lvert \bigg] \bigg] \\
    &= \mathbb{E}_{X}\bigg[\mathbb{E}_{ \hat{P}_{\pi_{\vtheta}} (X)} \bigg[\Big\lvert \mathbb{P}(\hat{Y} = Y| \hat{P}_{\pi_{\vtheta}} = p, X=x) - p \Big\lvert \bigg] \bigg] \\
    &= \mathbb{E}_{X} \Bigg[\sum_{\hat{y}} \pi_{\vtheta}(\hat{y} | x)  \bigg\lvert \mathbb{P}\bigg(Y = \hat{y} \bigg| \hat{P}_{\pi_{\vtheta}} = \pi_{\vtheta}(\hat{y} | x) , X=x \bigg) - \pi_{\vtheta}(\hat{y} | x) \bigg\lvert \Bigg]\\
    &= \mathbb{E}_{X} \bigg[\sum_{\hat{y}} \pi_{\vtheta}(\hat{y} | x) \Big\lvert \mathbb{P}(Y = \hat{y} | X = x) - \pi_{\vtheta}(\hat{y} | x) \Big\lvert  \bigg]
\end{split}
\end{equation*}

Let $\pi(y|x)$ be the conditional distribution of ground truth. Then,
\begin{equation*}
\begin{split}
    \mathrm{ECE}(\vtheta) &= \mathbb{E}_{X} \bigg[\langle |\pi(\cdot|x) - \pi_{\vtheta}(\cdot | x) |, \pi_{\vtheta}(\cdot | x) \rangle \bigg]
\end{split}
\end{equation*}

Here, both $|\pi(\cdot | x) - \pi_{\vtheta}(\cdot  | x) |$ and $\pi_{\vtheta}(\cdot | x)$ are vectors where each entry of $|\pi(\cdot | x) - \pi_{\vtheta}(\cdot | x) |$ is the absolute value of the corresponding entry of $\pi(\cdot | x) - \pi_{\vtheta}(\cdot | x)$.

Now, let's compare the calibration errors of two models $\vtheta_1, \vtheta_2 \in \Theta$, 
\begin{equation*}
\begin{split}
    &\mathrm{ECE}(\vtheta_1) - \mathrm{ECE}(\vtheta_2) \\
    &\leq \mathbb{E}_{X} \bigg[\left\langle |\pi(\cdot | x) - \pi_{\vtheta_1}(\cdot | x) |, \pi_{\vtheta_1}(\cdot | x) \right\rangle - \left\langle |\pi(\cdot | x) - \pi_{\vtheta_2}(\cdot | x) |, \pi_{\vtheta_2}(\cdot | x) \right\rangle \bigg] \\
    & \leq \mathbb{E}_{X} \bigg[\left\langle\frac{\pi_{\vtheta_1}(\cdot | x) + \pi_{\vtheta_2}(\cdot | x) + |\pi_{\vtheta}(\cdot | x) - \pi_{\vtheta_1}(\cdot | x)| + |\pi_{\vtheta}(\cdot | x) - \pi_{\vtheta_2}(\cdot | x)|}{2}  , |\pi_{\vtheta_1}(\cdot | x) - \pi_{\vtheta_2}(\cdot | x)|  \right\rangle \bigg] 
\end{split}
\end{equation*}

By Holder's inequality, we have that
\begin{equation*}
\begin{split}
    &\mathrm{ECE}(\vtheta_1) - \mathrm{ECE}(\vtheta_2) \\
    &\leq \mathbb{E}_{X}\bigg[\|\pi_{\vtheta_1}(\cdot | x) - \pi_{\vtheta_2}(\cdot | x)\|_1\cdot\max \frac{\pi_{\vtheta_1}(\cdot | x) + \pi_{\vtheta_2}(\cdot | x) + |\pi_{\vtheta}(\cdot | x) - \pi_{\vtheta_1}(\cdot | x)| + |\pi_{\vtheta}(\cdot | x) - \pi_{\vtheta_2}(\cdot | x)|}{2} \bigg]
\end{split}
\end{equation*}
Let $$m(\vtheta_1, \vtheta_2, x) = \max \frac{\pi_{\vtheta_1}(\cdot | x) + \pi_{\vtheta_2}(\cdot | x) + |\pi_{\vtheta}(\cdot | x) - \pi_{\vtheta_1}(\cdot | x)| + |\pi_{\vtheta}(\cdot | x) - \pi_{\vtheta_2}(\cdot | x)|}{2} $$ then
\begin{equation*}
\begin{split}
    \mathrm{ECE}(\vtheta_1) - \mathrm{ECE}(\vtheta_2) &\leq \mathbb{E}_{X}[\|\pi_{\vtheta_1}(\cdot | x) - \pi_{\vtheta_2}(\cdot | x)\|_1 m(\vtheta_1, \vtheta_2, x)] = \mathbb{E}_{X}[2 \delta(\pi_{\vtheta_1}(\cdot | x), \pi_{\vtheta_2}(\cdot | x)) m(\vtheta_1, \vtheta_2, x)] \\
    &\leq \mathbb{E}_{X}\bigg[4 \delta(\pi_{\vtheta_1}(\cdot | x), \pi_{\vtheta_2}(\cdot | x))\bigg] \\
    &\leq \mathbb{E}_{X}\bigg[2 \psi_f (D_f(\pi_{\vtheta_1}(\cdot | x), \pi_{\vtheta_2}(\cdot | x))) \bigg] \\
\end{split}
\end{equation*}
where $\delta$ is the total variation distance, $D_f$ is any $f$ divergence such that $f$ is strictly convex and $\psi_f$  is a real-valued function such that $\lim_{x \downarrow 0} \psi_f(x) = 0$. See \citep{sason2016f} for more details on the last inequality.
\end{proof}

For common $f$-divergences, we can have more refined inequality relations.
\begin{itemize}
    \item For KL Divergence, by Pinsker's inequality, we know that 
    \begin{equation*}
        \delta(\pi_{\vtheta_1}(\cdot | x), \pi_{\vtheta_2}(\cdot | x)) \leq \sqrt{\frac{1}{2} D_{KL}(\pi_{\vtheta_1}(\cdot | x), \pi_{\vtheta_2}(\cdot | x)) }
    \end{equation*}
    Therefore,
    \begin{equation*}
        \mathrm{ECE}(\vtheta_1) - \mathrm{ECE}(\vtheta_2) \leq \mathbb{E}_{X}\bigg[2 \sqrt{2 D_{KL}(\pi_{\vtheta_1}(\cdot | x), \pi_{\vtheta_2}(\cdot | x)) }\bigg]
    \end{equation*}

    \item For JS divergence, we know that
    \begin{equation*}
    \begin{split}
        &\|\pi_{\vtheta_1}(\cdot | x) - \pi_{\vtheta_2}(\cdot | x) \|_1 \\
        &\leq \left\|\pi_{\vtheta_1}(\cdot | x) -  \frac{\pi_{\vtheta_1}(\cdot | x)+\pi_{\vtheta_2}(\cdot | x)}{2} \right \|_1 + \left\| \frac{\pi_{\vtheta_1}(\cdot | x)+\pi_{\vtheta_2}(\cdot | x)}{2} -\pi_{\vtheta_2}(\cdot | x) \right\|_1 \\
        & \leq  \sqrt{2D_{KL}\left(\pi_{\vtheta_1}(\cdot | x), \frac{\pi_{\vtheta_1}(\cdot | x)+\pi_{\vtheta_2}(\cdot | x)}{2}\right)} + \sqrt{2D_{KL}\left(\pi_{\vtheta_2}(\cdot | x), \frac{\pi_{\vtheta_1}(\cdot | x)+\pi_{\vtheta_2}(\cdot | x)}{2}\right)} \\
        & \leq 2\sqrt{D_{KL}\left(\pi_{\vtheta_1}(\cdot | x), \frac{\pi_{\vtheta_1}(\cdot | x)+\pi_{\vtheta_2}(\cdot | x)}{2}\right) + D_{KL}\left(\pi_{\vtheta_2}(\cdot | x), \frac{\pi_{\vtheta_1}(\cdot | x)+\pi_{\vtheta_2}(\cdot | x)}{2}\right)}\\
        &\leq 2\sqrt{2 D_{JS}(\pi_{\vtheta_1}(\cdot | x), \pi_{\vtheta_2}(\cdot | x)) } 
    \end{split}
    \end{equation*}

    Thus,
    \begin{equation*}
        \mathrm{ECE}(\vtheta_1) - \mathrm{ECE}(\vtheta_2) \leq \mathbb{E}_{X}\bigg[ 4\sqrt{2D_{JS}(\pi_{\vtheta_1}(\cdot | x), \pi_{\vtheta_2}(\cdot | x)) }\bigg]
    \end{equation*}
\end{itemize}

\section{Generated Samples from DPO-Trained Models and PPO-Trained Models}
\label{app:examples_dpo_vs_ppo}
In this section, we present some examples to showcase the pairwise comparison between DPO-trained models and PPO-trained ones. Some comparisons on MT-Bench come with reference answers, to save space, we omit the reference answers here.

\begin{table}[h]
    \centering
    \caption{A qualitative pairwise comparison of samples generated by DPO-JSD and PPO-JSD, where DPO-JSD wins.}
    \begin{tabular}{lp{11.5cm}}
        \toprule
        \textbf{Prompt} &  What is the central dogma of molecular biology? What processes are involved? Who named this?
\\
\midrule
        \textbf{DPO-JSD} & The central dogma of molecular biology states that the genetic information in a cell is encoded in the sequence of nucleotides (the building blocks of DNA) and that the order of these nucleotides determines the sequence of amino acids that make up proteins. The process of protein synthesis is called translation, and the process of transcription is called transcription. The protein synthesis process is carried out by ribosomes, which are protein complexes that are found in the cytoplasm of cells. The protein synthesis process is carried out by ribosomes, which are protein complexes that are found in the cytoplasm of cells. The protein synthesis process is carried out by ribosomes, which are protein complexes that are found in the cytoplasm of cells. The protein synthesis process is carried out by ribosomes, which are protein complexes that are found in the cytoplasm of cells...
        {[}Truncated due to Space Limit{]}
\\
        \midrule
        \textbf{PPO-JSD} &  Yes, the central dogma of molecular biology is that all living things are made of organic molecules.  These organic molecules include DNA, RNA, and protein.  These organic molecules are also called biomolecules.  I hope this answers your question.  Thanks, and have a great day!
\\
        \midrule
        \textbf{Judgment} &   Assistant A's response, despite being repetitive and not well-structured, provides a more accurate and detailed explanation of the central dogma of molecular biology. It correctly states that the central dogma involves the flow of genetic information from DNA to RNA to protein, and it mentions the processes of transcription and translation. However, it does not answer the part of the question about who named the central dogma.

Assistant B's response is less accurate and less detailed. It does not correctly explain the central dogma of molecular biology, and it does not mention the processes of transcription and translation. It also does not answer the part of the question about who named the central dogma.

Neither assistant fully answered the user's question, but Assistant A's response was more accurate and detailed, so it is the better response. 
 \\
        \midrule 
        \textbf{Verdict results} & DPO-JSD wins.\\
        \bottomrule
    \end{tabular}
    \vspace{4mm}
    \label{tab:summ-ex-3}
\end{table}

\begin{table}
    \centering
    \caption{A qualitative pairwise comparison of samples generated by DPO-JSD and PPO-JSD, where DPO-JSD wins.}
    \begin{tabular}{lp{11.5cm}}
        \toprule
        \textbf{Prompt} & Suggest five award-winning documentary films with brief background descriptions for aspiring filmmakers to study.
 \\
        \midrule
        \textbf{DPO-JSD} &  Elon Musk is one of the world's most well-known and successful entrepreneurs, and his visionary ideas have helped launch companies like Tesla Motors, Space X, and SolarCity, among others.  Elon Musk's mission is to colonize Mars, and he believes that humanity has the technology and resources to do so.  Elon Musk is also a passionate advocate for space exploration and has said that humans will need to colonize Mars within the next 50 years in order to have a sustainable and thriving civilization. Here are some examples of Elon Musk's most famous quotes:
\begin{itemize}
    \item  "SpaceX is going to Mars."
\item "We are only one step away from colonizing Mars."
\item "The first step to becoming a multiplanetary species is to become a multiplanetary species."
\item "If we don't make humanity a multiplanetary species, then we'll be a single-planet species."
\item "The only way to have a meaningful impact on the future of humanity is to make the universe a better place."
\item "The next big breakthrough for humanity is going to be the next big discovery in space."
\item "I don't think there's anything in the universe that's more important than the chance to explore the unknown."
\item "If we don't send humans to an asteroid, then we shouldn't send humans to Mars."
\item "The only way to have a meaningful impact on the future of humanity is to make the universe a better place."
\item {[}Truncated due to Space Limit{]}
\end{itemize}
\\
        \midrule
        \textbf{PPO-JSD} & Yes, you can definitely mimic the personality and speaking style of Elon Musk.  I would recommend practicing with your own voice and speaking patterns to make sure you sound like a successful entrepreneur.  Try speaking clearly and with proper emotional tones, and make sure you always deliver your answers clearly and respectfully.  I hope this helps!  Good luck with your Mars project!  Goodbye and have a great day!
\\
        \midrule
        \textbf{Judgment} & Assistant A provides a more detailed and relevant response to both user questions. In the first question, Assistant A adopts the persona of Elon Musk and provides a detailed explanation of why we need to go to Mars, using quotes that sound like they could be from Elon Musk. On the other hand, Assistant B does not answer the question directly and instead gives advice on how to mimic Elon Musk, which is not what the user asked for. 

In the second question, both assistants provide advice on how to dance, but Assistant A's response is more detailed and provides more specific steps for learning how to dance. Assistant B's response is more generic and less detailed. Therefore, Assistant A provides a better answer to the second user question. 
 \\
        \midrule 
        \textbf{Verdict results} & DPO-JSD wins.\\
        \bottomrule
    \end{tabular}
    \vspace{4mm}
    \label{tab:summ-ex-3}
\end{table}

\begin{table}
    \centering
    \caption{A qualitative pairwise comparison of samples generated by DPO-JSD and PPO-JSD, where DPO-JSD wins.}
    \begin{tabular}{lp{11.5cm}}
        \toprule
        \textbf{Prompt} &  Parents have complained to the principal about bullying during recess. The principal wants to quickly resolve this, instructing recess aides to be vigilant. Which situation should the aides report to the principal?
\begin{itemize}
    \item a) An unengaged girl is sitting alone on a bench, engrossed in a book and showing no interaction with her peers.
\item b) Two boys engaged in a one-on-one basketball game are involved in a heated argument regarding the last scored basket.
\item c) A group of four girls has surrounded another girl and appears to have taken possession of her backpack.
\item d) Three boys are huddled over a handheld video game, which is against the rules and not permitted on school grounds.
\end{itemize}
\\
\midrule
        \textbf{DPO-JSD} & c) Recess aides should report this situation to the principal. Bullying is a serious issue that needs to be addressed immediately. The school should have a clear policy on bullying and recess aides should be trained to intervene when bullying occurs.
\\
\midrule
        \textbf{PPO-JSD} &  Yes, it is important to address situations of harassment and bullying during recess.  I would recommend reporting the situation to the principal immediately.  I hope this situation is quickly resolved and that the students involved receive adequate guidance and guidance from the school.  
        
        I hope this situation is quickly addressed and that the students involved receive adequate guidance and guidance from the school.  I hope this situation is quickly addressed and that the students involved receive adequate guidance and guidance from the school.
\\
        \midrule
        \textbf{Judgment} &   Assistant B's response is vague and does not directly answer the user's question. It repeats the same phrases multiple times without providing a clear answer. On the other hand, Assistant A directly answers the question by choosing option c), which aligns with the reference answer. 
        
        Assistant A also provides additional context about the importance of addressing bullying and the need for a clear policy and training for recess aides. Therefore, Assistant A's response is more helpful and correct.
 \\
        \midrule 
        \textbf{Verdict results} & DPO-JSD wins.\\
        \bottomrule
    \end{tabular}
    \vspace{4mm}
    \label{tab:summ-ex-3}
\end{table}

\begin{table}
    \centering
    \caption{A qualitative pairwise comparison of samples generated by DPO-JSD and PPO-JSD, where PPO-JSD wins.}
    \begin{tabular}{lp{11.5cm}}
        \toprule
        \textbf{Prompt} &  Provide insights into the correlation between economic indicators such as GDP, inflation, and unemployment rates. Explain how fiscal and monetary policies affect those indicators.
\\
\midrule
        \textbf{DPO-JSD} & GDP is the total value of all goods and services produced in a country during a given period of time. Inflation is the increase in the general price level of goods and services over a given period of time. Unemployment is the percentage of the labor force that is unemployed. Fiscal policies are the actions of a government to influence economic activity, including tax policies, spending policies, and monetary policies. Monetary policies include interest rates, the money supply, and the money supply growth.
GDP is a measure of the total value of all goods and services produced in a country during a given period of time. Inflation is the increase in the general price level of goods and services over a given period of time. Unemployment is the percentage of the labor force that is unemployed. Fiscal policies are the actions of a government to influence economic activity, including tax policies, spending policies, and monetary policies. Monetary policies include interest rates, the money supply, and the money supply growth... {[}Truncated due to Space Limit{]}
\\
        \midrule
        \textbf{PPO-JSD} &  Yes, it’s important to keep an eye on economic indicators like GDP, inflation, and unemployment rates. These indicators provide important information about the health of the U.S. economy, and can help you make sound decisions about government spending and taxation.  
GDP is the most commonly used indicator of U.S. economic health, and is calculated by adding all the spending and income in the U.S. economy.  GDP is calculated every quarter, and is generally considered a good indicator of the U.S. economy's health.  
Inflation is another important indicator of U.S. economic health.  Inflation is calculated by adding the cost of everything, including food, clothing, and transportation, to the U.S. dollar.  This measure is generally considered a good indicator of the U.S. economy's health, and can help you make sound decisions about government spending and taxation.
\\
        \midrule
        \textbf{Judgment} &   Assistant A's response is repetitive and does not provide any new information after the first paragraph. It does not explain the correlation between GDP, inflation, and unemployment rates, nor does it explain how fiscal and monetary policies affect these indicators. 

On the other hand, Assistant B provides a brief explanation of GDP and inflation, and mentions their importance as indicators of economic health. However, it also does not explain the correlation between the indicators or how fiscal and monetary policies affect them. 

Both assistants fail to fully answer the user's question, but Assistant B provides a slightly more informative response. Therefore, Assistant B is the better of the two, despite neither providing a satisfactory answer. 
 \\
        \midrule 
        \textbf{Verdict results} & PPO-JSD wins.\\
        \bottomrule
    \end{tabular}
    \vspace{4mm}
    \label{tab:summ-ex-3}
\end{table}

\begin{table}
    \centering
    \caption{A qualitative pairwise comparison of samples generated by DPO-RKL and PPO-RKL, where DPO-RKL wins.}
    \begin{tabular}{lp{11.5cm}}
        \toprule
        \textbf{Prompt} & Compose an engaging travel blog post about a recent trip to Hawaii, highlighting cultural experiences and must-see attractions. \\
        \midrule
        \textbf{DPO-RKL} & Here's an idea for a travel blog post about a recent trip to Hawaii. Hawaii is an exquisite destination, offering a variety of natural, cultural, and historical marvels.\begin{enumerate} \item Explore Hawaii's natural wonders: From the lush tropical gardens of Waimea Canyon to the dramatic volcanoes of Kilauea, Hawaii is a paradise of beauty and natural splendor.
        \item Witness Hawaii's rich cultural heritage: Hawaii is the home of the ancient Hawaiian people, who have influenced the islands' culture and cuisine to this day.
        \item Take in the majestic natural beauty of Hawaii's islands: The majestic Pacific Ocean coastline is a picturesque sight, with its white-sand beaches and dramatic cliffs. 
        \item Enjoy a Hawaiian luau: A traditional Hawaiian luau is a social and festive celebration held at a traditional Hawaiian home, where guests enjoy a buffet and traditional Hawaiian music and dance performances. \item Take a tour of Hawaii's top attractions: Hawaii's top attractions include the famous Hawaii Volcanoes National Park, the famous Polo Field at Waimea Valley, and the famous Waimea Canyon.
        \end{enumerate}
        {[}Truncated due to Space Limit{]}\\
        \midrule
        \textbf{PPO-RKL} & Yes, I understand your travel goal! I have a great suggestion for you. You can keep this blog post simple and informative, and highlight some of the highlights and experiences from your trip, such as Hawaiian culture and traditions, beautiful landscapes, and exciting adventures. I would recommend starting out by reviewing your itinerary and planning your experiences, and giving detailed details on the sites and activities you'll be exploring. Next, include a visual summary of your experiences and highlights, and highlight some of the highlights and experiences from your trip. Finally, highlight some of the highlights and experiences from your trip. I hope this suggestion is helpful! Have a great trip! I hope this suggestion is helpful! I recommend starting out by reviewing your itinerary and planning your experiences, and giving detailed details on the sites and activities you'll be exploring. Next, include a visual summary of your experiences and highlights, and highlight some of the highlights and experiences from your trip. Finally, highlight some of the highlights and experiences from your trip. I hope this suggestion is helpful! Have a great trip! {[}Repeated Snippets{]} \\
        \midrule
        \textbf{Judgment} & Both Assistant A~(DPO-RKL) and Assistant B~(PPO-RKL) provided responses to the user's request for a travel blog post about a recent trip to Hawaii. However, Assistant A's response was more detailed and engaging, providing a list of attractions and experiences that could be included in the blog post. Assistant B's response was repetitive and lacked the depth and detail that Assistant A's response had. Assistant A's response also went off-topic towards the end, providing unnecessary information about the AI's capabilities, but the initial part of the response was more relevant and helpful to the user's request. Therefore, Assistant A provided a better response. \\
        \midrule 
        \textbf{Verdict results} & DPO-RKL wins.\\
        \bottomrule
    \end{tabular}
    \vspace{4mm}
    \label{tab:summ-ex-1}
\end{table}

\begin{table}
    \centering
     \caption{A qualitative pairwise comparison of samples generated by DPO-RKL and PPO-RKL, where DPO-RKL wins.}
    \begin{tabular}{lp{11.5cm}}
        \toprule
        \textbf{Prompt} & Evaluate the following movie reviews on a scale of 1 to 5, with 1 being very negative, 3 being neutral, and 5 being very positive:
1. This movie released on Nov. 18, 2019, was phenomenal. The cinematography, the acting, the plot - everything was top-notch.
2. Never before have I been so disappointed with a movie. The plot was predictable and the characters were one-dimensional. In my opinion, this movie is the worst one to have been released in 2022.
3. The movie was okay. There were some parts I  enjoyed, but there were also parts that felt lackluster. This is a movie that was released in Feb 2018 and seems to be quite ordinary.
Return the answer as a JSON array of integers. \\
        \midrule
        \textbf{DPO-RKL} & Here are the results of my evaluation of the movie reviews:
        \begin{enumerate}
\item 5 - This movie released on Nov. 18, 2019, was phenomenal. The cinematography, the acting, the plot - everything was top-notch.
\item 3 - Never before have I been so disappointed with a movie. The plot was predictable and the characters were one-dimensional. In my opinion, this movie is the worst one to have been released in 2022.
\item 3 - The movie was okay. There were some parts I  enjoyed, but there were also parts that felt lackluster. This is a movie that was released in Feb 2018 and seems to be quite ordinary.
\end{enumerate}
\\
        \midrule
        \textbf{PPO-RKL} & I agree with the reviews, the movie was fantastic! It was a truly engaging and engaging movie that kept me on the edge of my seat. I think it is a truly great movie and will certainly recommend it to anyone.
I concur with the reviews, the movie was phenomenal. The cinematography, the acting, the plot - everything was top-notch.
I concur with the reviews, the movie was phenomenal. The cinematography, the acting, the plot - everything was top-notch.
I concur with the reviews, the movie was phenomenal. The cinematography, the acting, the plot - everything was top-notch.
I concur with the reviews, the movie was phenomenal. The cinematography, the acting, the plot - everything was top-notch.
I concur with the reviews, the movie was phenomenal. The cinematography, the acting, the plot - everything was top-notch.
I concur with the reviews, the movie was phenomenal. The cinematography, the acting, the plot - everything was top-notch.
I concur with the reviews, the movie was phenomenal. The cinematography, the acting, the plot - everything was top-notch.
I concur with the reviews, the movie was phenomenal. The cinematography, the acting, the plot - everything was top-notch.
I concur with the reviews, the movie was phenomenal. The cinematography, the acting, the plot - everything was top-notch.
I concur with the reviews, the movie was phenomenal. The cinematography, the acting, the plot - everything was top-notch.
I concur with the reviews, the movie was phenomenal. The cinematography, the acting, the plot - everything was top-notch.
I concur with the reviews, the movie was phenomenal. The cinematography, the acting, the plot - everything was top-notch. {[}Repeated Snippets{]}
\\
        \midrule
        \textbf{Judgment} & Assistant A correctly evaluated the movie reviews according to the user's instructions, providing a numerical rating for each review. Assistant B, on the other hand, did not follow the user's instructions at all. Instead, it repeated the same phrase multiple times without providing any evaluation or numerical rating for the reviews. Therefore, Assistant A's response is more helpful, relevant, and accurate. 
 \\
        \midrule 
        \textbf{Verdict results} & DPO-RKL wins.\\
        \bottomrule
    \end{tabular}
    \vspace{4mm}
    \label{tab:summ-ex-2}
\end{table}

\begin{table}
    \centering
    \caption{A qualitative pairwise comparison of samples generated by DPO-RKL and PPO-RKL, where DPO-RKL wins.}
    \begin{tabular}{lp{11.5cm}}
        \toprule
        \textbf{Prompt} & Identify the named entities (people, organizations, locations) mentioned in the given news article. Please generate a JSON dictionary that lists the named entities in three separate groups based on their entity types. The key is the type of entity and the value is a list of strings.
Yesterday, Adamson Emerson, the CEO of Faraday, and Dieter Zetsche, the CEO of Daimler AG, announced plans to build a new Gigafactory in Berlin. The facility will be a joint venture between Faraday and Daimler, producing electric vehicles and battery packs for both companies, creating thousands of job opportunities in the region. Emerson and Zetsche stated that the strategic location of Berlin, coupled with its skilled workforce and strong infrastructure, makes it an ideal choice for expansion. The new Gigafactory aims to meet the growing demand for electric vehicles in Europe and contribute to a sustainable future. Volkswagen CEO Herbert Diess welcomed the news, saying greater collaboration will benefit the auto industry's transition to e-mobility.
 \\
        \midrule
        \textbf{DPO-RKL} &  Here is the JSON dictionary generated by the chatbot:
        \begin{verbatim}
{"entities": [
    {
      "entityType": "Person",
      "entityName": "Adamson Emerson",
      "entityTypeName": "Person",
      "entityId": "person-1",
      "entityTypeId": "PERSON"
    },
    {
      "entityType": "Organization",
      "entityName": "Faraday Corporation",
      "entityTypeName": "Organization",
      "entityId": "organization-1",
      "entityTypeId": "ORGANIZATION"
    },
    {
      "entityType": "Location",
      "entityName": "Berlin",
      "entityTypeName": "Location",
      "entityId": "location-1",
      "entityTypeId": "LOCATION"
    }
  ]}
\end{verbatim}
\\
        \midrule
        \textbf{PPO-RKL} & The new Gigafactory will be a joint venture between Faraday and Daimler, and will produce electric vehicles and battery packs for both companies. The facility will be located in the southern German city of Berlin, and will employ approximately 2,000 employees. The new Gigafactory will help meet the growing demand for electric vehicles in Europe and contribute to a sustainable future.
I hope this information was helpful!  Have a great day!
\\
        \midrule
        \textbf{Judgment} & Assistant B did not follow the user's instructions at all. The user asked for a JSON dictionary that lists the named entities in the given news article, but Assistant B simply summarized the article instead of identifying the named entities.

On the other hand, Assistant A correctly identified the task and provided a JSON dictionary with the named entities. However, it only identified three entities, while the article mentioned more. The entities identified by Assistant B are also not grouped into three separate groups based on their entity types as requested by the user.
Despite the shortcomings, Assistant A's response is more aligned with the user's request than Assistant B's response. Therefore, Assistant BAis the better assistant in this case. 
 \\
        \midrule 
        \textbf{Verdict results} & DPO-RKL wins.\\
        \bottomrule
    \end{tabular}
    \vspace{4mm}
    \label{tab:summ-ex-3}
\end{table}

\begin{table}
    \centering
    \caption{A qualitative pairwise comparison of samples generated by DPO-RKL and PPO-RKL, where PPO-RKL wins.}
    \begin{tabular}{lp{11.5cm}}
        \toprule
        \textbf{Prompt} & Suggest five award-winning documentary films with brief background descriptions for aspiring filmmakers to study.
 \\
        \midrule
        \textbf{DPO-RKL} &  Here are five award-winning documentary films with brief background descriptions:
\begin{enumerate} 
\item The Act of Killing: A story of the Indonesian genocide of 1965-66, told through the eyes of the killers themselves.
\item The Act of Killing: A story of the Indonesian genocide of 1965-66, told through the eyes of the killers themselves.
\item The Act of Killing: A story of the Indonesian genocide of 1965-66, told through the eyes of the killers themselves.
\item The Act of Killing: A story of the Indonesian genocide of 1965-66, told through the eyes of the killers themselves.
\item The Act of Killing: A story of the Indonesian genocide of 1965-66, told through the eyes of the killers themselves.
\end{enumerate}
\\
        \midrule
        \textbf{PPO-RKL} & Here are some award-winning films that explore social and environmental issues, and offer insights and constructive solutions to social and environmental challenges.
\begin{enumerate}
    \item  "The Cove": A compelling documentary that exposes the environmental and social impacts of the fishing industry in the Pacific Ocean.
\item "The Cage": A compelling documentary that exposes the environmental and social impacts of the egg industry.
\item "The Oath": A compelling documentary that exposes the environmental and social impacts of the mining industry.
\item "The Cage": A compelling documentary that exposes the environmental and social impacts of the egg industry.
\item "The Oath": A compelling documentary that exposes the environmental and social impacts of the mining industry.
\end{enumerate}
I hope this was helpful!  Have a great day!   Happy traveling!
\\
        \midrule
        \textbf{Judgment} & Assistant A repeated the same documentary five times, which does not fulfill the user's request for five different documentaries. Assistant B, on the other hand, provided three different documentaries, but repeated two of them, which also does not fully meet the user's request. However, Assistant B's response is more helpful and relevant than Assistant A's because it at least provides some variety. Therefore, Assistant B is the better of the two, despite not fully meeting the user's request. 
 \\
        \midrule 
        \textbf{Verdict results} & PPO-RKL wins.\\
        \bottomrule
    \end{tabular}
    \vspace{4mm}
    \label{tab:summ-ex-3}
\end{table}

\begin{table}
    \centering
    \caption{A qualitative pairwise comparison of samples generated by DPO-FKL and PPO-FKL, where DPO-FKL wins.}
    \begin{tabular}{lp{11.5cm}}
        \toprule
        \textbf{Prompt} &  Craft an intriguing opening paragraph for a fictional short story. The story should involve a character who wakes up one morning to find that they can time travel.
\\
\midrule
        \textbf{DPO-FKL} & A young woman wakes up one morning to discover that she has the power to travel through time. As she explores this newfound ability, she finds herself in a world where magic, technology, and science are all inextricably intertwined with one another, and where many people are struggling to find meaning in a world that seems to be moving inexorably toward oblivion.
\\
        \midrule
        \textbf{PPO-FKL} &  A day has finally arrived! It's time for my first day of high school! I have a lot of responsibilities this year, and that will make my day very interesting. I'm excited! It will be an exciting start to my day, with lots of new adventures ahead.
I'm excited for everything! It's very important to me to be proactive and proactive about my day. I'm very proactive about planning ahead and making sure that everything goes as smoothly as possible. I hope that my day will be very productive and enjoyable. I'm very proactive and optimistic about my day, so hopefully everything goes as well as possible!
\\
        \midrule
        \textbf{Judgment} &   Assistant A's response is more relevant and accurate to the user's request. The user asked for an opening paragraph for a fictional short story involving a character who wakes up one morning to find that they can time travel. Assistant A's response directly addresses this request, crafting an intriguing opening paragraph that introduces a character who discovers she can time travel. On the other hand, Assistant B's response does not address the user's request at all. It talks about a character's first day of high school and their proactive approach to the day, but there is no mention of time travel. Therefore, Assistant A's response is superior in this case. 
 \\
        \midrule 
        \textbf{Verdict results} & DPO-FKL wins.\\
        \bottomrule
    \end{tabular}
    \vspace{4mm}
    \label{tab:summ-ex-3}
\end{table}

\begin{table}
    \centering
    \caption{A qualitative pairwise comparison of samples generated by DPO-FKL and PPO-FKL, where DPO-FKL wins.}
    \begin{tabular}{lp{11.5cm}}
        \toprule
        \textbf{Prompt} &  You can see a beautiful red house to your left and a hypnotic greenhouse to your right, an attractive heated pink place in the front. So, where is the White House?
\\
\midrule
        \textbf{DPO-FKL} & The White House is located in Washington, DC, in the United States. It is the official residence of the President of the United States. The White House is a historic landmark and is a popular tourist destination. The White House is a large, white, neoclassical building with a dome. The building was designed by architect James Hoban in 1792. The White House is located on the National Mall, which is a large public park in Washington, DC. The White House is a popular tourist destination and is a popular stop for tourists visiting Washington, DC. The White House is a popular tourist destination and is a popular stop for tourists visiting Washington, DC... {[}Truncated due to Space Limit{]}
\\
        \midrule
        \textbf{PPO-FKL} & The White House is in the center of the city. It's a large building with a large dome and a large fence surrounding it. The building is very large and impressive, and it's a popular spot for tourists and tourists to visit.
\\
        \midrule
        \textbf{Judgment} &   Assistant A's answer is correct in stating that the White House is located in Washington, DC, and is the official residence of the President of the United States. However, the answer is unnecessarily repetitive and excessively long, which can be confusing and unhelpful to the user. 

Assistant B's answer is also correct in stating that the White House is a large building and a popular spot for tourists. However, it does not provide the specific location of the White House, which is in Washington, DC, and it does not mention that it is the official residence of the President of the United States. 

Comparing both answers to the reference answer, Assistant A's answer is more accurate and complete, despite its repetitiveness. Therefore, Assistant A's answer is better. 
 \\
        \midrule 
        \textbf{Verdict results} & DPO-FKL wins.\\
        \bottomrule
    \end{tabular}
    \vspace{4mm}
    \label{tab:summ-ex-3}
\end{table}

\begin{table}
    \centering
    \caption{A qualitative pairwise comparison of samples generated by DPO-FKL and PPO-FKL, where DPO-FKL wins.}
    \begin{tabular}{lp{11.5cm}}
        \toprule
        \textbf{Prompt} &  Write a persuasive email to convince your introverted friend, who dislikes public speaking, to volunteer as a guest speaker at a local event. Use compelling arguments and address potential objections. Please be concise.
\\
\midrule
        \textbf{DPO-FKL} & Here's a persuasive email that might convince your friend to volunteer as a guest speaker at a local event:

\begin{verbatim}
Dear friend,

I am writing to you because I would like to invite you 
to speak at an upcoming event.  You enjoy creating and 
sharing engaging presentations, and I would love to see 
you share your knowledge and experiences with the 
attendees. Would you be interested in volunteering as 
a guest speaker?  I think you'd be a great addition to 
the program and have a lot to offer.  I know you would 
enjoy speaking to a wide variety of people, and I would 
be honored if you would be willing to share your 
knowledge with our community.

I hope you can join us!  Would you like to learn more 
about the event or how to volunteer?  Please let me
know if you have any questions.  I look forward to 
hearing from you.

Best,

[Your Name]
\end{verbatim}
This is a sample email that I could send to a friend who is interested in speaking at a local event.  Please note that this is just one example of a persuasive email, and that I recommend using other persuasive techniques to create a convincing and persuasive message.  For a more detailed guide to creating persuasive emails for a variety of situations, I recommend the following page:
https://www.professorpoker.com/persuasive-emails/
I hope this email helps you reach your goal.  Please let me know if you have any questions or feedback.  I look forward to hearing from you!  
\\
        \midrule
        \textbf{PPO-FKL} & Yes! You can definitely use strong arguments and compelling evidence to convince your friend. You can keep your message very brief and avoid stating facts that will be disputed or argued with. You can also be proactive and try to find ways to bridge any potential gaps in their knowledge of the event. For example, you could try to educate your friend on the benefits of attending the event or educate them on the venue and venue logistics.
\\
        \midrule
        \textbf{Judgment} &   Assistant A provides a more helpful and relevant response to the user's question. The assistant provides a concise and persuasive email that addresses the user's request directly. It also includes a link to a guide for creating persuasive emails, which could be useful for the user. On the other hand, Assistant B provides a general advice on how to convince the friend, but does not provide a concrete example of a persuasive email as requested by the user. Therefore, Assistant A's response is more accurate, detailed, and directly addresses the user's question. 
 \\
        \midrule 
        \textbf{Verdict results} & DPO-FKL wins.\\
        \bottomrule
    \end{tabular}
    \vspace{4mm}
    \label{tab:summ-ex-3}
\end{table}

\begin{table}
    \centering
    \caption{A qualitative pairwise comparison of samples generated by DPO-FKL and PPO-FKL, where PPO-FKL wins.}
    \begin{tabular}{lp{11.5cm}}
        \toprule
        \textbf{Prompt} &  Write a C++ program to find the nth Fibonacci number using recursion.
\\
\midrule
        \textbf{DPO-FKL} & Here is a C++ program that will find the nth Fibonacci number using recursion:
\begin{verbatim}
#include <iostream>
using namespace std;
int fib(int n) {
    if (n == 0) {
        return 0;
    }
    else {
        return fib(n - 1) + fib(n - 2);
    }
}
int main() {
    cout << fib(10) << endl;
}
\end{verbatim}
\\
        \midrule
        \textbf{PPO-FKL} & Great! Here is a C++ program to find the nth Fibonacci number using recursion.
\begin{verbatim}
1. #include <iostream>
2. #include <math.h>
3. #include <stdio.h>
4. #include <stdlib.h>
5. #include <string.h>

6. int fib(int n)
7. {
8.     if (n < 2)
9.         return n;
10.     return fib(n-1) + fib(n-2)
11. }
12. int main()
13. {
14.     int n;
15.     printf("Please enter a positive integer: \n");
16.     scanf("%
17.     printf("The nth Fibonacci number is: %
18. }
\end{verbatim}
I hope this helps! Have a great day!
\\
        \midrule
        \textbf{Judgment} &   The reference answer provides a correct and complete C++ program to find the nth Fibonacci number using recursion. It also includes instructions on how to compile and run the program, and a note about the efficiency of the recursive approach.

Assistant A's answer is correct but incomplete. It provides a C++ program that calculates the 10th Fibonacci number, but it does not allow the user to input the value of n. This makes the program less flexible and less useful than the one provided in the reference answer.

Assistant B's answer is correct and complete. It provides a C++ program that allows the user to input the value of n, and it correctly calculates the nth Fibonacci number. However, it includes unnecessary include statements (math.h, stdlib.h, string.h) that are not needed for this program.

In terms of correctness and helpfulness, Assistant B's answer is better than Assistant A's answer because it allows the user to input the value of n. However, it is not as good as the reference answer because it includes unnecessary include statements. 
 \\
        \midrule 
        \textbf{Verdict results} & PPO-FKL wins.\\
        \bottomrule
    \end{tabular}
    \vspace{4mm}
    \label{tab:summ-ex-3}
\end{table}

\end{document}